%% file: main.tex
\documentclass{dukecei}

\usepackage{helvet}
\usepackage{courier}

\input{math_commands.tex}
\usepackage[utf8]{inputenc}
\usepackage[T1]{fontenc}

\usepackage{newtxtext,newtxmath}

\usepackage{amsmath}
\usepackage{amsfonts}
\usepackage{amssymb}
\usepackage{amsthm}
\usepackage{mathtools}
\usepackage{bbm}
\usepackage{bm}
\usepackage{nicefrac}

\usepackage{graphicx}
\usepackage{wrapfig}
\usepackage{float}
\usepackage{adjustbox}

\usepackage[table]{xcolor}
\usepackage{array}
\usepackage{booktabs}
\usepackage{multirow}
\usepackage{makecell}
\usepackage{tabularx}
\usepackage{longtable}

\newcolumntype{C}[1]{>{\centering\arraybackslash}p{#1}}
\newcolumntype{L}[1]{>{\raggedright\arraybackslash}p{#1}}

\definecolor{basisblue}{rgb}{0.93,0.96,1.0}
\definecolor{basisbg}{RGB}{229,227,247}
\definecolor{hlblue}{HTML}{D9F2F2}

\definecolor{pressureblue}{HTML}{EEF4FA}
\definecolor{pressurebluehead}{HTML}{DCE9F5}

\definecolor{paperblue}{HTML}{1A5B8C}
\definecolor{steelblue}{HTML}{2E75B6}
\definecolor{darkteal}{HTML}{006666}

\definecolor{tocblue}{RGB}{0,0,120}
\definecolor{accentpink}{HTML}{F00080}

\newcommand{\redhead}[1]{%
  \textcolor{accentpink}{\textbf{\textsc{#1}}}%
}

\usepackage{algorithm}
\usepackage{algorithmic}

\usepackage{textcomp}
\usepackage[inline]{enumitem}
\usepackage{ragged2e}
\usepackage{setspace}

\usepackage{caption}
\usepackage{subcaption}

\usepackage{tocloft}
\usepackage{titletoc}

\usepackage{mdframed}
\usepackage[skins,breakable]{tcolorbox}

\surroundwithmdframed[
  linewidth=0.5pt,
  linecolor=gray!20,
  tikzsetting={dash pattern=on 3pt off 3pt},
  backgroundcolor=gray!10,
  roundcorner=4pt,
  innerleftmargin=6pt,
  innerrightmargin=6pt,
  innertopmargin=8pt,
  innerbottommargin=6pt
]{theorem}

\surroundwithmdframed[
  linewidth=0.5pt,
  linecolor=gray!20,
  tikzsetting={dash pattern=on 3pt off 3pt},
  backgroundcolor=gray!10,
  roundcorner=4pt,
  innerleftmargin=6pt,
  innerrightmargin=6pt,
  innertopmargin=8pt,
  innerbottommargin=6pt
]{proposition}

\surroundwithmdframed[
  linewidth=0.5pt,
  linecolor=gray!20,
  tikzsetting={dash pattern=on 3pt off 3pt},
  backgroundcolor=gray!10,
  roundcorner=4pt,
  innerleftmargin=6pt,
  innerrightmargin=6pt,
  innertopmargin=8pt,
  innerbottommargin=6pt
]{criteria}

\surroundwithmdframed[
  linewidth=0.5pt,
  linecolor=gray!20,
  tikzsetting={dash pattern=on 3pt off 3pt},
  backgroundcolor=gray!10,
  roundcorner=4pt,
  innerleftmargin=6pt,
  innerrightmargin=6pt,
  innertopmargin=8pt,
  innerbottommargin=6pt
]{theoremx}

\usepackage{microtype}

\usepackage{url}
\usepackage{hyperref}

\hypersetup{
    colorlinks=true,
    linkcolor=tocblue,
    citecolor=tocblue,
    urlcolor=tocblue
}

\title{
The Functional Structure of Post-Compression Recovery in Low-Rank LLMs
}

\author{%
Zishan Shao$^{1,\dagger}$,
Liang Tian$^{1}$,
Georgiy Zemlevskiy$^{1}$,
Kangning Cui$^{4}$,
Lixun Zhang$^{1}$,
Yixiao Wang$^{1}$,
Ting Jiang$^{2}$,
Jinhee Kim$^{2}$,
Yixuan Chen$^{5}$,
Rui-Feng Wang$^{3}$,
Fan Yang$^{4}$,
Hai Li$^{1}$,
Yiran Chen$^{1}$
\\[0.65em]
\normalfont\small
$^{1}$Duke University
\quad
$^{2}$Carnegie Mellon University
\\
$^{3}$University of Florida
\quad
$^{4}$Wake Forest University
\quad
$^{5}$University of Oxford
}

\begin{document}

\maketitle
\thispagestyle{firstpagestyle}

\begin{abstract}

Different low-rank compression methods can produce compressed LLMs that respond differently to the same post-compression recovery procedure, and relative advantages observed between methods at the compression endpoint may shrink, grow, or even reverse after recovery. We ask whether this recovery heterogeneity reflects functional structure beyond scalar loss evolution, and how that structure evolves throughout recovery. Our results establish that this heterogeneity reflects a reproducible compression-induced functional structure, which we formalize as \textbf{recovery pressure}. To characterize this structure consistently throughout recovery, we develop a standardized functional characterization
within each backbone that is applicable across heterogeneous low-rank methods. The primary backward characterization reveals reproducible module-wise structure across independent probes, while a complementary forward-only characterization recovers related structure without loss or backpropagation. We further find that recovery pressure measured at the endpoint is associated with subsequent recovery response; during recovery, its module-wise structure is reorganized non-uniformly, and localized updates induce distributed responses beyond directly updated modules. Further evidence indicates that tracking this evolving structure provides a complementary functional view of recovery progress alongside scalar loss.


\end{abstract}










\vspace{-0.2cm}
\section{Introduction}

\input{text/introduction}

\section{Related Work}

\input{text/relatedwork}

\section{Recovery Pressure: A Functional Structure Induced by Low-Rank Compression}
\label{sec:recoverability}

\input{text/methodology}




\section{Recovery Pressure at the Compression Endpoint}
\label{sec:endpoint-pressure}

\input{text/findings}

\section{Recovery Reshapes Pressure Structure}

\input{text/recovery_dynamics}

\section{Recovery Pressure as a Complementary View of Recovery Dynamics}
\label{sec:beyond-loss}

\input{text/findings/finding_4}

\section{Conclusion}

\input{text/conclusion}

\clearpage
\bibliographystyle{abbrvnat}
\bibliography{main}



\clearpage
\beginsupplement


\onecolumn
\appendix
{\hypersetup{linkcolor=black}
\tableofcontents
}

\vspace{0.5cm}
\begin{center}
    \LARGE {\bf Appendix}
\end{center}

\appendix
\section{Appendix}

\subsection{Additional Design} \label{app:additional_design}

\input{text/appendix_design}

\subsection{Additional Experiments} \label{app:additional_exp}

\input{text/appendix_experiment}

\end{document}

%% file: math_commands.tex
\usepackage{amsmath,amsfonts,bm}

\def\eqref#1{equation~\ref{#1}}

\def\1{\bm{1}}

\DeclareMathAlphabet{\mathsfit}{\encodingdefault}{\sfdefault}{m}{sl}
\SetMathAlphabet{\mathsfit}{bold}{\encodingdefault}{\sfdefault}{bx}{n}

%% file: text/introduction.tex
\begin{wrapfigure}{r}{0.36\textwidth}
  \centering
  \vspace{-24pt} 
  \includegraphics[width=\linewidth]{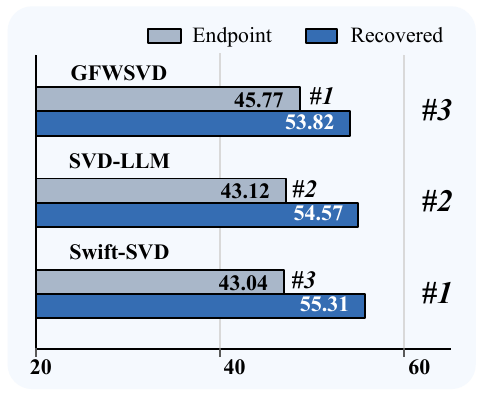}
  \caption{
\textbf{Endpoint rankings need not be preserved after recovery.} Avg7 MCQ Performance~(\ref{sec:exp-setup}) on Qwen3-8B-Base, $r=0.6$.
}
  \label{fig:intro-motivation}
  \vspace{-15pt} 
\end{wrapfigure}




Low-rank factorization has become an increasingly popular approach to
compressing LLMs~\citep{
wang2025svdllm,wang2025svdllmv2,yuan2023asvd,wang2025dobi,
abbasi2026zssvd,gao2026dfsvd,hu2026saes,lin2024modegpt,
chekalina2025gfwsvd,wang2025basis,ding2025dipsvd,gao2025gfsvd,
li2025adasvd,mi2025drank,sinha2026aasvd}.
However, compressed models produced by different low-rank methods can respond
very differently to the same post-compression recovery procedure. Relative
advantages observed at the compression endpoint may
\textit{\textbf{shrink, grow, or even reverse after recovery}}
(Fig.~\ref{fig:intro-motivation}). This recovery heterogeneity shows that
endpoint performance specifies where a compressed model starts, but does not
fully describe how it evolves under subsequent recovery.


These divergent recovery responses raise a more fundamental question: do they simply reflect differences in scalar performance evolution, or do they reveal functional structure in compressed models? This motivates our central question:

\newpage

\begin{tcolorbox}[
  enhanced,
  frame hidden,
  borderline west={2pt}{0pt}{black!70},
  borderline east={2pt}{0pt}{black!70},
  colback=gray!10,
  boxrule=0pt,
  arc=3pt,
  halign=center,
  top=6pt,
  bottom=6pt,
  before skip=8pt,
  after skip=8pt
]
\textbf{Does post-compression recovery exhibit functional structure beyond scalar loss evolution, and how is this structure organized and reshaped over recovery?}
\end{tcolorbox}


Answering this question requires separating recovery-dependent responses from
confounds introduced by the recovery procedure itself. Two issues are
particularly important. First, recovery trajectories depend on the choice of training setups (e.g., data, optimizer, etc.). Second, unconstrained recovery can alter the coordinates selected by compression, making it difficult to distinguish functional evolution within those coordinates from changes to the coordinates themselves. We therefore compare compressed models under the same recovery intervention and use fixed-coordinate recovery to reduce these recovery-side confounds.


With these confounds controlled, we need a functional characterization that is
comparable across heterogeneous low-rank methods. Because their internal
parameterizations can differ substantially by methods, we compare matched perturbation
responses at architecture-aligned module outputs relative to the dense
reference. This provides a standardized functional interface within each
backbone that applies across low-rank methods and can be evaluated consistently
throughout recovery.


Under this common interface, our primary characterization reveals reproducible
module-wise functional structure across independent probes. We further test
whether related structure remains visible under a qualitatively different
forward-only characterization at the compression endpoint, without task loss
or backpropagation. Across compression methods and model backbones, the two
characterizations recover related functional structure. We formalize this
reproducible compression-induced structure as
\textbf{\emph{recovery pressure}}.



Because the same characterization can be applied throughout recovery, we can directly study how recovery pressure evolves. We find that its module-wise structure is not uniformly reduced: relative module contributions are systematically reorganized over time, while localized recovery at a single MLP module induces distributed changes across non-updated attention and MLP blocks. Recovery therefore exhibits trackable functional evolution beyond its scalar loss trajectory. Finally, we examine whether this functional structure is relevant to recovery itself. At the compression endpoint, recovery pressure measured by our characterization is associated with subsequent recovery response. These findings reveal post-compression recovery in low-rank compressed LLMs as a \emph{functionally structured process}; tracking this evolving structure further provides a complementary functional view of recovery progress alongside scalar loss.







Our contributions are: 
\begin{itemize}

\item We discover that post-compression recovery exhibits \emph{reproducible functional structure beyond scalar loss evolution}, which we formalize as \textbf{\emph{recovery pressure}}.

\item We develop a standardized characterization procedure of recovery pressure within each backbone and applicable across diverse low-rank methods, enabling recovery pressure to be tracked throughout post-compression recovery.

\item We systematically study the evolution of \emph{recovery pressure} and find that its module-wise structure is non-uniformly reorganized over time and offers a complementary functional characterization of recovery progress alongside scalar loss.




\end{itemize}

%% file: text/relatedwork.tex
\paragraph{Low-Rank Compression for LLMs.}
Recent low-rank LLM compression methods improve factorized models through
activation-aware transformations, reconstruction- or loss-aware truncation,
gradient-informed objectives, adaptive rank allocation, and cross-layer
sharing~\citep{
yuan2023asvd,wang2025svdllmv2,wang2025dobi,abbasi2026zssvd,
gao2026dfsvd,gao2025gfsvd,hu2026saes,ding2025dipsvd,
li2025adasvd,mi2025drank,chekalina2025gfwsvd,lin2024modegpt,
wang2025basis,sinha2026aasvd}.
SVD-LLM~\citep{wang2025svdllm} further introduces sequential low-rank updates
for post-compression recovery, while Low-Rank Prehab~\citep{qin2025low}
prepares models before SVD to improve both compression and subsequent
recovery. These works primarily seek stronger compressed representations and
better endpoint quality. Our question is different: \emph{whether post-compression recovery exhibits functional structure beyond scalar loss evolution, and how that structure changes throughout recovery.}

\vspace{-0.3cm}
\paragraph{Post-Compression Recovery.}
Post-compression adaptation is widely used to restore model quality after
pruning and other forms of compression, including LoRA-style recovery and
structure-aware compensation~\citep{
hu2022lora,ma2023llm,chen2023lorashear,liu2026laco,feng2026restoring}.
Recovery has also been incorporated into compression decisions:
PRACTISE~\citep{wang2023practise} estimates recoverability for block pruning,
while Relative Repairability~\citep{zhan2026repairability} measures residual
damage after lightweight repair for sparsity allocation. In these settings,
recovery primarily serves to improve adaptation or guide compression choices.
We instead use a controlled recovery intervention to study the functional
organization induced by compression and how it is reorganized by subsequent
updates. In particular, fixed-coordinate recovery preserves the
compression-selected coordinates, reducing changes to those coordinates as a
confound in this comparison.


\vspace{-0.3cm}
\paragraph{Sensitivity Measures and Functional Diagnostics.} Gradient- and sensitivity-based quantities have long been used to characterize parameter or component importance, including Taylor criteria, SNIP, and loss sensitivity to gated Transformer components~\citep{molchanov2017pruning,lee2018snip,michel2019sixteen}. More recently, TraceNAS compares parameter-gradient traces between pretrained and structurally pruned models as a zero-shot proxy for functional inheritance and post-pruning recovery potential~\citep{malettira2026tracenas}. We do not claim novelty for the derivative primitive itself. Rather, we ask whether post-compression recovery exhibits reproducible functional structure that persists across probe sets, is visible through complementary functional characterizations, and is systematically reorganized by recovery. We use \textbf{recovery pressure} to name this empirical structure rather than any particular sensitivity metric or candidate-ranking proxy. Task-conditioned backward sensitivity provides our primary temporal characterization, while forward-only perturbation responses provide complementary evidence at the compression endpoint.

%% file: text/methodology.tex
\subsection{From Recovery Heterogeneity to Functional Structure}
\label{sec:pressure-object}

\begin{wrapfigure}{r}{0.44\textwidth}
  \centering
  \vspace{-20pt} 
  \includegraphics[width=\linewidth]{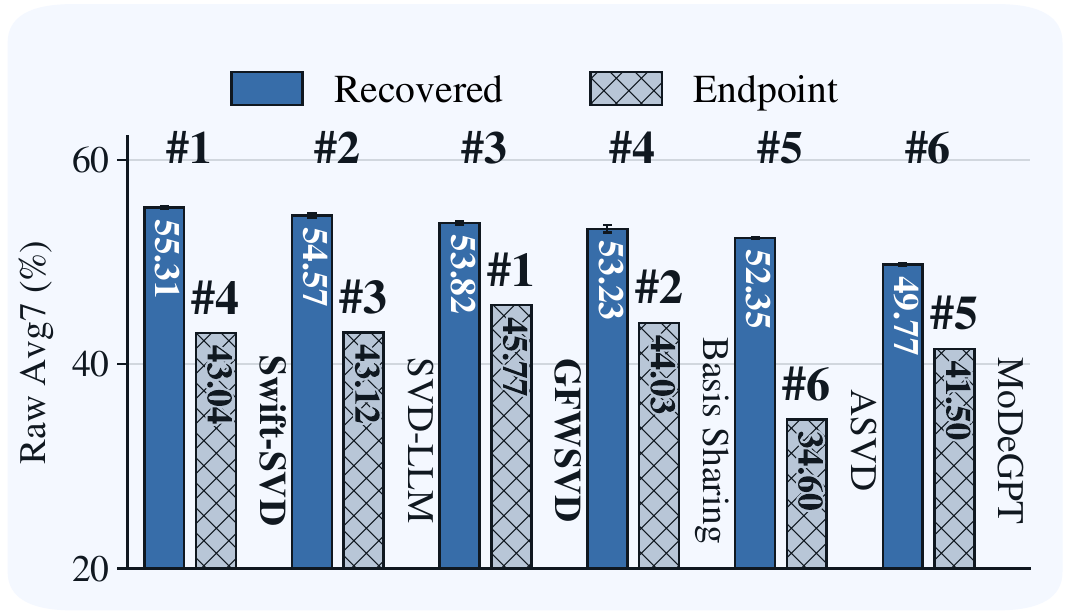}
  \caption{
    \textbf{Recovery reshapes comparative performance.}
    Endpoint (hatched) and recovered (solid) Avg7~(\ref{sec:exp-setup}) across representative settings, with endpoint ranks annotated above each method.
  }
  \label{fig:ranking-instability}
  \vspace{-27pt} 
\end{wrapfigure}





Post-compression recovery can substantially reshape the relative performance
of low-rank compressed models. Pairwise advantages observed at the compression
endpoint may shrink, grow, or even reverse after recovery (Fig.~\ref{fig:ranking-instability} and Appendix~\ref{app:full_endpoint_recovery}). 

\paragraph{Observation: Endpoint comparative advantages are not preserved under recovery.} \emph{The relative performance margins among compressed models can change substantially after post-compression recovery, in some cases enough to reverse their endpoint ordering.}


This recovery heterogeneity motivates us to ask whether the heterogeneous recovery responses exhibit systematic functional structure beyond scalar loss evolution. We formalize this structure as \textbf{\emph{recovery pressure}} below.

\vspace{2pt}
\noindent\colorbox{pressureblue}{%
\parbox{\dimexpr\columnwidth-2\fboxsep\relax}{%
\begin{definition}[\redhead{Recovery pressure}]
\label{def:recovery-pressure}
We use \textbf{\emph{recovery pressure}} to denote the \emph{compression-induced functional structure underlying heterogeneity} in post-compression recovery responses.
\end{definition}%
}}
\vspace{2pt}

\subsection{Disentangling Recovery from Intervention Effects and Coordinate Drift}
\label{sec:controlled-recovery}

Two difficulties complicate the accurate characterization of post-compression recovery. First, the observed trajectory depends on the recovery intervention design (e.g., data, optimizer, schedule, etc.), which confounds model evaluations unless carefully controlled. Second, unconstrained recovery can alter the compression-selected coordinates selected during compression, making it hard to distinguish whether updates are reorganizing the compression-selected coordinates or replacing it entirely (Appendix~\ref{app:fixed_coordinate}).

We address the first difficulty by declaring a shared finite-budget recovery intervention. Let $\mathcal I$ denote this intervention and $d_0$ denote the dense reference of the compressed endpoint $c_0$. The intervention generates $c_t=\mathcal R_t(c_0;\mathcal I),$ with $0\le t\le T.$ We compare compressed models under the same $\mathcal I$, so differences in their recovery responses are evaluated under same optimization conditions.





\begin{definition}[Finite-Budget Recoverability]
\label{def:finite_budget_recovery}
For compressed endpoint $c_0$ under intervention $\mathcal I$, let $D_t
=
\mathcal L_{\mathrm{val}}(c_t)
-
\mathcal L_{\mathrm{val}}(d_0)$ denote the remaining signed loss gap to the dense reference. We define
the \textbf{recovery response} at step $t$ as $H_t
=
D_0-D_t
=
\mathcal L_{\mathrm{val}}(c_0)
-
\mathcal L_{\mathrm{val}}(c_t),$ and characterize the finite-budget recoverability of $c_0$ under
$\mathcal I$ by the recovery-response trajectory $\mathcal H_T(c_0;\mathcal I)=\{H_t\}_{t=0}^{T}.$
\end{definition}

We use $H_T$ for fixed-budget comparisons and the full trajectory $\mathcal H_T$ for trajectory-level analyses. Accordingly, recoverability in this work is intervention-conditioned. 


Fixing $\mathcal I$ controls the optimization conditions, but does not ensure
preservation of the compression-selected coordinates. If the low-rank factors are updated with no parameter constraints, the compression-selected coordinates can
rotate or drift, confounding structural evolution within the endpoint with replacement of its defining coordinates. We therefore
restrict recovery to a fixed-coordinate parameterization to address this second
challenge.

Specifically, for a compressed operator $\widehat W_i = W_{2,i} W_{1,i}$ with
$W_{2,i} \in \mathbb R^{d_{\mathrm{out}} \times k_i}$ and
$W_{1,i} \in \mathbb R^{k_i \times d_{\mathrm{in}}}$, the compression-selected
outer factors define the fixed coordinate class
\begin{equation}
\mathcal F_i
=
\left\{
W_{2,i} C W_{1,i}
\;\middle|\;
C\in\mathbb R^{k_i\times k_i}
\right\}.
\label{eq:coordinate-class}
\end{equation}

\begin{definition}[Fixed-Coordinate Recovery]
\label{def:fixed-coordinate}
Fixed-coordinate recovery freezes $W_{2,i}$ and $W_{1,i}$ and adapts only an
internal middle map
$C_i=I_{k_i}+A_iB_i$, where
$A_i\in\mathbb R^{k_i\times r_a}$,
$B_i\in\mathbb R^{r_a\times k_i}$, and $r_a\ll k_i$.
The recovered operator is
\begin{equation}
\widetilde W_i
=
W_{2,i}C_iW_{1,i}
=
W_{2,i}(I_{k_i}+A_iB_i)W_{1,i}.
\label{eq:fixed-coordinate}
\end{equation}
\end{definition}

\begin{proposition}[Coordinate Preservation]
\label{prop:coordinate-preservation}
Any operator $\widetilde W_i$ produced by
Definition~\ref{def:fixed-coordinate} remains in the compression-selected
coordinate class $\widetilde W_i\in\mathcal F_i$. Consequently, $\operatorname{col}(\widetilde W_i)
\subseteq
\operatorname{col}(W_{2,i}),$ and $
\operatorname{row}(\widetilde W_i)
\subseteq
\operatorname{row}(W_{1,i}).$
If $C_i$ is nonsingular and $W_{2,i},W_{1,i}$ have full column and row rank,
respectively, both containments hold with equality.
\end{proposition}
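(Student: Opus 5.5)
The plan is a direct linear-algebra argument in three steps: membership, the two containments, and the equality case. For membership, I set $C := C_i = I_{k_i}+A_iB_i \in \mathbb R^{k_i\times k_i}$. Then \eqref{eq:fixed-coordinate} has exactly the form $W_{2,i}CW_{1,i}$, so $\widetilde W_i\in\mathcal F_i$ by the definition of $\mathcal F_i$ in \eqref{eq:coordinate-class}. No property of $A_i,B_i$ is needed beyond the fact that the product has the right shape.

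For the containments, I use the elementary facts $\operatorname{col}(XY)\subseteq\operatorname{col}(X)$ and $\operatorname{row}(XY)\subseteq\operatorname{row}(Y)$. Writing $\widetilde W_i = W_{2,i}(C_iW_{1,i})$ gives $\widetilde W_i x = W_{2,i}(C_iW_{1,i}x)\in\operatorname{col}(W_{2,i})$ for every $x$. For rows, I apply the same argument to $\widetilde W_i^\top = W_{1,i}^\top (W_{2,i}C_i)^\top$, using $\operatorname{row}(M)=\operatorname{col}(M^\top)$, which yields $\operatorname{row}(\widetilde W_i)\subseteq\operatorname{row}(W_{1,i})$.

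For equality, the cleanest route is a rank count. Assume $W_{2,i}$ has full column rank $k_i$, $W_{1,i}$ has full row rank $k_i$, and $C_i$ is invertible. Multiplication on the left by a matrix with trivial kernel preserves rank, so
\[
\operatorname{rank}(W_{2,i}C_iW_{1,i})=\operatorname{rank}(C_iW_{1,i}).
\]
Multiplication by an invertible matrix also preserves rank, so $\operatorname{rank}(C_iW_{1,i})=\operatorname{rank}(W_{1,i})=k_i$. Hence $\operatorname{rank}\widetilde W_i = k_i$.

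Since $\operatorname{col}(W_{2,i})$ and $\operatorname{row}(W_{1,i})$ both have dimension $k_i$, each containment is an inclusion of subspaces of equal finite dimension, and therefore an equality.

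No step is a real obstacle. The only point needing care is the rank-preservation claim $\operatorname{rank}(W_{2,i}M)=\operatorname{rank}(M)$ when $W_{2,i}$ is injective. I would justify it by noting that $\ker(W_{2,i}M)=\ker(M)$ and applying rank–nullity. Alternatively, I could make surjectivity explicit: $W_{1,i}$ maps $\mathbb R^{d_{\mathrm{in}}}$ onto $\mathbb R^{k_i}$ and $C_i$ is bijective, so $\operatorname{col}(\widetilde W_i)=W_{2,i}\mathbb R^{k_i}=\operatorname{col}(W_{2,i})$; the row statement then follows symmetrically by transposition.
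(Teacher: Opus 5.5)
Your proof is correct and is the argument the paper intends: the paper gives no written proof, treating membership in $\mathcal F_i$ (take $C=C_i$) and the two containments as immediate from the column/row spaces of a product, and your rank count correctly supplies the equality case. Your surjectivity variant is slightly sharper, since it shows each equality needs only half the hypotheses. Column equality needs only $C_i$ nonsingular and $W_{1,i}$ of full row rank, and row equality needs only $C_i$ nonsingular and $W_{2,i}$ of full column rank.
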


Thus, recovery modifies interactions within compression-selected coordinates
without introducing new outer directions. We adopt this controlled parameterization to remove coordinate drift as a confound when studying the evolution of recovery pressure.

\subsection{Characterizing Recovery Pressure}
\label{sec:functional-characterization}


Within the recovery setting above, we now characterize
\textbf{recovery pressure} throughout the recovery trajectory. Because recovery pressure represents the functional structure induced across heterogeneous low-rank compression methods, whose internal parameterizations
can differ substantially, a direct comparison in parameter space is not
well-defined for our purpose. \textbf{\emph{Corresponding Transformer module outputs, in contrast, form model architecture-aligned functional boundaries shared across these low-rank methods.}}

This observation motivates a central design choice in our characterization: at corresponding module outputs, we apply matched perturbations to the compressed or recovering model and its dense reference, and compare the resulting loss changes. We subtract the corresponding dense-model change to isolate the compression-induced functional discrepancy.

A second design choice concerns how these shared functional boundaries should be perturbed. We use multiplicative channel-wise gates because they provide a relative, identity-centered perturbation interface whose coordinates are explicitly localized to individual output channels. This makes the response decomposable by module and allows perturbations to be restricted to selected modules. Let $\mathcal Q$ be a fixed probe set and let $z_m(x)\in\mathbb R^{L_x\times d_m}$ denote the output of module $m\in\mathcal M$ before residual addition. We perturb $z_m(x)\mapsto g_m\odot z_m(x),$ with each channel gate shared across tokens and all module gates collected into $g$.

\begin{figure}
    \centering
    \includegraphics[width=0.99\linewidth]{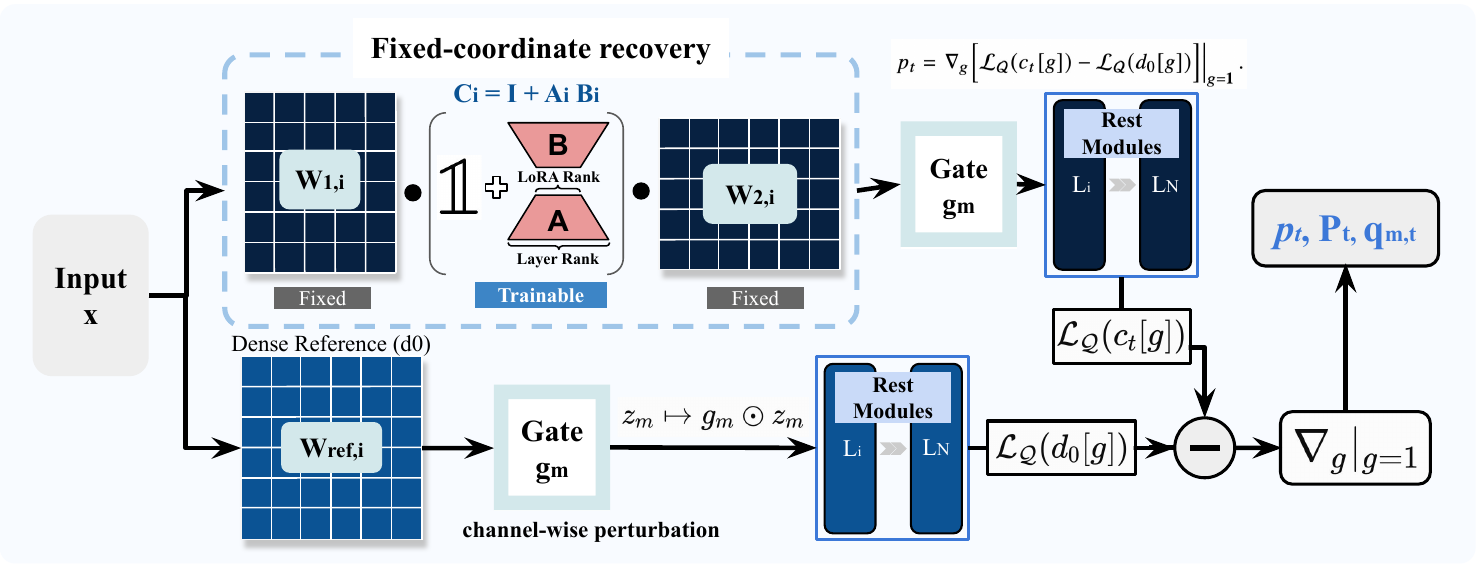}
    \caption{
    \textbf{Fixed-coordinate recovery and backward characterization of recovery pressure.}
    Recovery updates the trainable middle map while preserving compression-selected coordinates.
    Matched channel-wise perturbations at corresponding outputs of $c_t$ and $d_0$ define the dense-referenced response $p_t$, summarized by $P_t$ and $q_{m,t}$.
    }
    \label{fig:pipeline}
\end{figure}

\begin{definition}[Task-conditioned backward characterization]
\label{def:backward-pressure}
For recovery point $c_t$ and dense reference $d_0$, define the
compressed--dense loss-response contrast
\begin{equation}
p_t
=
\left.
\nabla_g
\Big[
\mathcal L_{\mathcal Q}(c_t[g])
-
\mathcal L_{\mathcal Q}(d_0[g])
\Big]
\right|_{g=\mathbf 1}.
\label{eq:pressure}
\end{equation}
We summarize this characterization by its overall magnitude
$P_t=\|p_t\|_2$ and its module-wise structure $q_{m,t}
=
\frac{\|p_{m,t}\|_2^2}{\|p_t\|_2^2},$ with $ p_t=\operatorname{concat}_{m\in\mathcal M}p_{m,t},$
$P_t>0.$ At the compression endpoint, these quantities are denoted $P_0$ and
$q_{m,0}$.
\end{definition}

Because the channel-wise gates define relative perturbations around the unperturbed network, the resulting characterization admits a direct
first-order functional interpretation.

\begin{proposition}[Local sensitivity interpretation]
\label{prop:local-sensitivity}
Let
$\Delta_t(g)=
\mathcal L_{\mathcal Q}(c_t[g])-
\mathcal L_{\mathcal Q}(d_0[g])$.
If $\Delta_t$ is differentiable at $g=\mathbf 1$, then for any perturbation
direction $v$,
\[
D\Delta_t(\mathbf 1)[v]=p_t^\top v,
\qquad
P_t
=
\max_{\|v\|_2\le 1}
\big|D\Delta_t(\mathbf 1)[v]\big|.
\]
Thus $P_t$ is the largest first-order compressed--dense difference in local
loss response under a unit Euclidean gate perturbation.
\end{proposition}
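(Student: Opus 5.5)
The plan is to treat the statement as the definition of the (Fréchet) derivative combined with Cauchy--Schwarz. First I will fix the setting: $g$ ranges over $\mathbb R^N$, where $N=\sum_{m\in\mathcal M} d_m$, and carries the Euclidean inner product. Differentiability of $\Delta_t$ at $g=\mathbf 1$ means there is a linear map $D\Delta_t(\mathbf 1)$ such that
\[
\Delta_t(\mathbf 1+v)=\Delta_t(\mathbf 1)+D\Delta_t(\mathbf 1)[v]+o(\|v\|_2).
\]
By the Riesz representation in $\mathbb R^N$, this linear functional is $v\mapsto \nabla_g\Delta_t(\mathbf 1)^\top v$.

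The first step is to identify $\nabla_g\Delta_t(\mathbf 1)$ with $p_t$. By Definition~\ref{def:backward-pressure}, $p_t$ is exactly the gradient of $\mathcal L_{\mathcal Q}(c_t[g])-\mathcal L_{\mathcal Q}(d_0[g])$ at $g=\mathbf 1$, and that expression is $\Delta_t(g)$. The first identity $D\Delta_t(\mathbf 1)[v]=p_t^\top v$ then follows immediately. There is one point to address explicitly. The definition is written as the gradient of a difference, and one might read it as the difference of the two models' gradients. Linearity of differentiation makes these agree whenever both terms are differentiable. The proposition only assumes $\Delta_t$ itself is differentiable, so I will work directly with $\Delta_t$ and avoid the splitting altogether.

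The second step is the variational characterization. For any $v$ with $\|v\|_2\le 1$, Cauchy--Schwarz gives
\[
|p_t^\top v|\le \|p_t\|_2\|v\|_2\le P_t.
\]
For attainment there are two cases. If $P_t>0$, which the definition assumes, take $v^\star=p_t/\|p_t\|_2$; this gives $p_t^\top v^\star=P_t$. If $P_t=0$, both sides are zero. The supremum is therefore a maximum and equals $P_t$, i.e. the dual norm of the $\ell_2$ norm is itself. The interpretive sentence then follows by reading $D\Delta_t(\mathbf 1)[v]$ as the first-order change in the compressed--dense loss difference along the gate perturbation $\mathbf 1\mapsto\mathbf 1+v$.

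No step is a real obstacle. The only care needed is in the first step: the gradient in equation~\ref{eq:pressure} must be taken with respect to the full concatenated gate vector in the same Euclidean coordinates used for $\|v\|_2$. Otherwise the Riesz identification, and hence the dual-norm equality, would involve a different norm.
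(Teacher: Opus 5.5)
Your proposal is correct and follows the paper's proof: identify $D\Delta_t(\mathbf 1)[v]$ with $p_t^\top v$ directly from the definition $p_t=\nabla_g\Delta_t(\mathbf 1)$, bound $|p_t^\top v|\le\|p_t\|_2$ by Cauchy--Schwarz, and attain the bound at $v=p_t/\|p_t\|_2$, with the case $p_t=0$ trivial. Your two added remarks are sound refinements that the paper only touches on in its measurement-convention comment: work with $\Delta_t$ directly rather than splitting it into two gradients, and take the gradient in the same Euclidean gate coordinates used for $\|v\|_2$.
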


In this way, the gate construction was connected directly to the functional information measured by $p_t$: each gate coordinate probes the effect of a relative rescaling at a specific output channel, while $P_t$ summarizes the strongest first-order discrepancy over such perturbation directions. Because our intervention design is fixed, the same characterization can be evaluated consistently along the recovery trajectory. The corresponding channel-level derivative and the proof of Proposition~\ref{prop:local-sensitivity} are given in Appendix~\ref{app:pressure}.

\subsection{Experimental Setup}
\label{sec:exp-setup}

Unless stated otherwise, we evaluate six low-rank compression methods on Llama-3.1-8B~\citep{grattafiori2024llama3} and Qwen3-8B-Base~\citep{yang2025qwen3} at retention ratios $r\in\{0.8,0.6\}$. Recovery uses the common fixed-coordinate intervention described above, with C4~\citep{raffel2020c4} training data and a budget of 384 updates. We report zero-shot Avg7 over BoolQ~\citep{clark2019boolq}, PIQA~\citep{bisk2020piqa}, HellaSwag~\citep{zellers2019hellaswag}, WinoGrande~\citep{sakaguchi2020winogrande}, ARC-Easy~\citep{clark2018think}, ARC-Challenge~\citep{clark2018think}, and OpenBookQA~\citep{mihaylov2018openbookqa}, and use held-out language-model loss for recovery-response analyses. The primary characterization is computed relative to the dense reference using the module-output perturbation interface in Section~\ref{sec:functional-characterization}. Detailed endpoint analyses use an independently sampled 128-window probe set (Q128), while recovery response is measured on V128, a separate 128-window validation set. Recovery results use three seeds unless a fixed-seed trajectory is explicitly stated; additional section-specific interventions and diagnostics are introduced where needed.

%% file: text/findings.tex
\begin{figure}
    \centering
    \includegraphics[width=0.99\linewidth]{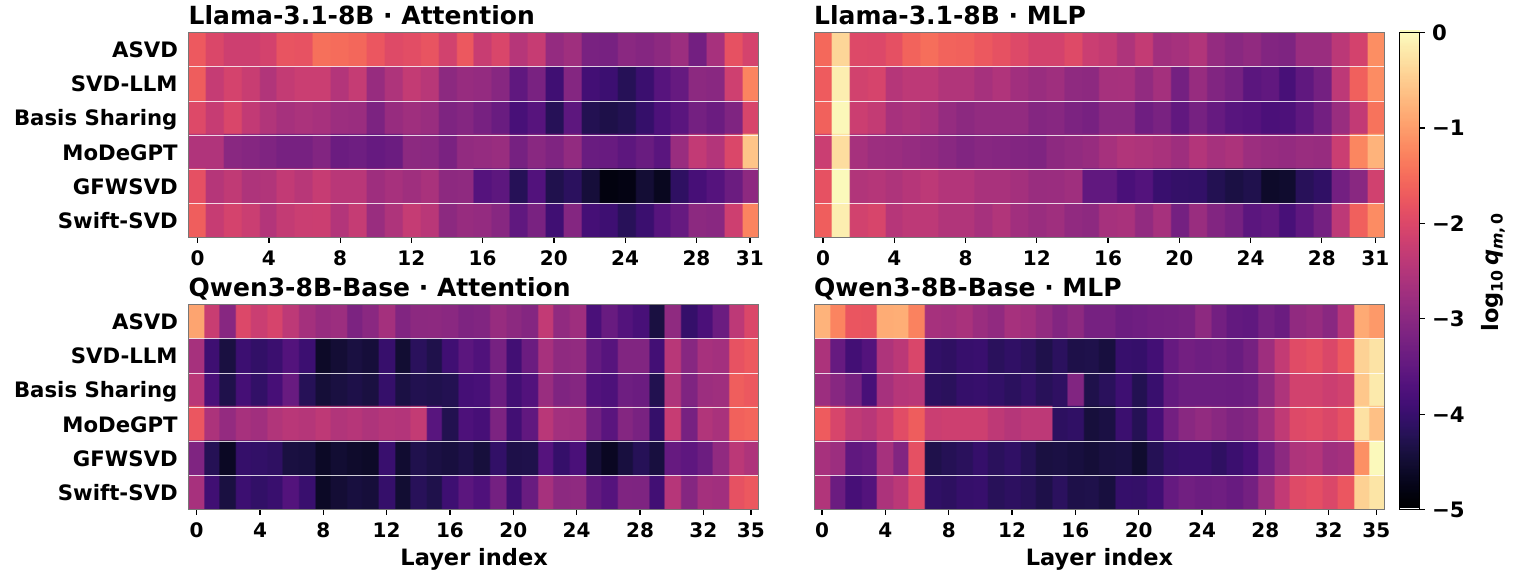}
    \caption{\textbf{Module-wise organization of endpoint recovery pressure.}
    Colors show $\log_{10}q_{m,0}$ for $r=0.8$ endpoints on the independent
    Q128 probe, normalized over attention and MLP modules.}
    \label{fig:pressure_organization}
\end{figure}

We now apply the primary characterization introduced in Section~\ref{sec:functional-characterization} at the compression endpoint and ask what it reveals about the effects of low-rank compression beyond scalar endpoint quality. We begin by examining the response patterns that emerge across compressed endpoints and whether they persist under independently sampled probes.

\paragraph{\textbf{\textit{The backward characterization reveals reproducible module-wise structure.}}} Figure~\ref{fig:pressure_organization} reveals a markedly non-uniform module-wise distribution of the endpoint response. Across both backbones, the response is concentrated in a subset of modules and is predominantly MLP-centered, while the dominant layers and their relative contributions vary across compression methods. The characterization therefore exposes both a recurring cross-endpoint pattern and substantial endpoint-specific structure that is not represented by scalar endpoint quality alone.

These patterns are largely reproducible under independent probes. Comparing Q64 with the independently sampled Q128 probe, ten of twelve $r=0.8$ endpoints retain strong agreement in module ranking (Spearman $\rho=0.968$--$0.996$), with MoDeGPT providing the main exception. The broad ordering of responsive modules is more stable than the exact allocation of profile mass, motivating the use of Q128 for the detailed endpoint analysis. Full results are reported in Appendix~\ref{app:source_profile_reproducibility}.

\begin{wraptable}{r}{0.50\textwidth}
\centering
\vspace{-10pt} 
\caption{
\textbf{Forward-only grounding.}
Statistics summarize six compression methods within each
backbone--retention setting. $\rho_{F,B}$ is reported as median [range];
$\rho_F^{\rm split}$ and Top-8 are medians.
}
\label{tab:forward-grounding}
\small
\setlength{\tabcolsep}{3.0pt}
\renewcommand{\arraystretch}{1.08}
\begin{tabular}{ccccc}
\toprule
Backbone & $r$ & $\rho_{F,B}$ median [range] & $\rho_F^{\mathrm{split}}$ & Top-8 \\
\midrule
\multirow{2}{*}{\shortstack{Llama-3.1\\8B}}
& 0.8 & .613 [.542, .910] & .987 & 5.5/8 \\
& 0.6 & .611 [.293, .832] & .983 & 5.0/8 \\
\midrule
\multirow{2}{*}{\shortstack{Qwen3-8B\\Base}}
& 0.8 & .377 [.042, .701] & .603 & 3.0/8 \\
& 0.6 & .471 [.295, .594] & .629 & 4.5/8 \\
\bottomrule
\end{tabular}
\vspace{-10pt} 
\end{wraptable}

\paragraph{\textbf{\textit{Forward-only perturbations recover related endpoint structure.}}} Because our primary characterization is task-conditioned and loss-based, we next ask whether the observed endpoint patterns remain visible under a characterization that does not use task loss or backpropagation. We therefore construct a complementary forward-only characterization measuring output responses directly.

For the forward-only characterization, we measure at each module the compressed--dense difference in the output response induced by matched local perturbations, and normalize these responses across modules to obtain a forward profile $q_m^{\rm F}$; the full construction is given in Appendix~\ref{app:forward_backward_alignment}. We compare this profile with the backward profile $q_{m,0}$ using Spearman rank correlation $\rho_{F,B}$. We additionally report $\rho_F^{\rm split}$, the agreement between forward profiles computed on two disjoint 16-window halves, and Top-8 overlap between the highest-ranked forward and backward modules.

\begin{wraptable}{r}{0.36\textwidth}
\centering
\caption{
\textbf{Endpoint magnitude and recovery.}
Spearman $\rho(P_0,H_T)$ across six low-rank methods.
}
\label{tab:pressure-recovery}
\small
\setlength{\tabcolsep}{5.0pt}
\renewcommand{\arraystretch}{1.08}
\begin{tabular}{ccc}
\toprule
Backbone & $r$ & $\rho(P_0,H_T)$ \\
\midrule
\multirow{2}{*}{\shortstack{Llama-3.1\\8B}}
& 0.8 & 1.000 \\
& 0.6 & 0.886 \\
\midrule
\multirow{2}{*}{\shortstack{Qwen3-8B\\Base}}
& 0.8 & 0.771 \\
& 0.6 & 0.829 \\
\bottomrule
\end{tabular}
\vspace{-8pt}
\end{wraptable}

Table~\ref{tab:forward-grounding} shows positive forward--backward module-rank alignment across all 24 compressed endpoints. The correspondence is consistently stronger on Llama, where it is also highly reproducible across independent input splits; Qwen exhibits weaker and more method-dependent alignment. The dominant forward responses nevertheless overlap systematically with those emphasized by the primary backward characterization. The two characterizations therefore expose related, rather than identical, endpoint structure. We retain the backward characterization as the primary task-conditioned view used to track recovery, while the forward-only result provides complementary grounding that related functional structure is visible without loss or backpropagation. Additional module-level comparisons are reported in Appendix~\ref{app:forward_backward_alignment}.


\begin{table*}[t]
\centering

\caption{
\textbf{Recovery-aware comparison of low-rank compressed endpoints.}
$P_0$ is measured on the Q128 probe before recovery.
Endpoint and Recovered report zero-shot Avg7 (\%) before and after
384 C4 updates under the shared fixed-coordinate intervention;
recovered scores are averaged over three seeds.
Bold marks the highest Endpoint or Recovered Avg7 within each
backbone--retention setting. Full recovery gains and seed variation are
reported in the Appendix~\ref{app:pressure_stopping_seeds}.
}
\label{tab:recovery_aware_comparison}

\scriptsize
\setlength{\tabcolsep}{2.8pt}
\renewcommand{\arraystretch}{1.10}

\resizebox{\textwidth}{!}{%
\begin{tabular}{
l
>{\columncolor{pressureblue}}c cc |
>{\columncolor{pressureblue}}c cc |
>{\columncolor{pressureblue}}c cc |
>{\columncolor{pressureblue}}c cc
}
\toprule
&
\multicolumn{3}{c|}{\textbf{Llama-3.1-8B} $r=.8$} &
\multicolumn{3}{c|}{\textbf{Llama-3.1-8B} $r=.6$} &
\multicolumn{3}{c|}{\textbf{Qwen3-8B-Base} $r=.8$} &
\multicolumn{3}{c}{\textbf{Qwen3-8B-Base} $r=.6$} \\
\cmidrule(lr){2-4}
\cmidrule(lr){5-7}
\cmidrule(lr){8-10}
\cmidrule(lr){11-13}

Method
& \cellcolor{pressurebluehead}\textbf{$P_0$}
& \textbf{Endpoint}
& \textbf{Recovered}
& \cellcolor{pressurebluehead}\textbf{$P_0$}
& \textbf{Endpoint}
& \textbf{Recovered}
& \cellcolor{pressurebluehead}\textbf{$P_0$}
& \textbf{Endpoint}
& \textbf{Recovered}
& \cellcolor{pressurebluehead}\textbf{$P_0$}
& \textbf{Endpoint}
& \textbf{Recovered} \\
\midrule

ASVD
& 1.314 & 33.55 & 55.14
& 1.285 & 32.55 & 44.86
& \textcolor{paperblue}{\textbf{0.241}} & \textbf{59.61} & \textbf{62.40}
& 1.464 & 34.60 & 52.35 \\

SVD-LLM
& 0.192 & 51.94 & 58.80
& 0.567 & 36.81 & 50.92
& 0.336 & 55.93 & 60.57
& 0.430 & 43.12 & 54.57 \\

Basis Sharing
& 0.374 & 51.17 & 57.97
& 1.425 & 33.67 & 48.47
& 0.450 & 53.97 & 60.42
& 0.680 & 44.03 & 53.23 \\

MoDeGPT
& \textcolor{paperblue}{\textbf{0.074}} & \textbf{61.10} & \textbf{62.45}
& \textcolor{paperblue}{\textbf{0.159}} & \textbf{45.94} & \textbf{53.80}
& 0.421 & 54.48 & 59.35
& 0.550 & 41.50 & 49.77 \\

GFWSVD
& 0.943 & 48.34 & 57.25
& 1.701 & 32.98 & 48.78
& 0.544 & 55.25 & 60.64
& 1.288 & \textbf{45.77} & 53.82 \\

Swift-SVD
& 0.194 & 51.55 & 58.91
& 0.568 & 36.39 & 51.16
& 0.344 & 56.10 & 61.09
& \textcolor{paperblue}{\textbf{0.434}} & 43.04 & \textbf{55.31} \\

\bottomrule
\end{tabular}%
}
\end{table*}

\paragraph{\textbf{\textit{Endpoint pressure is associated with subsequent recovery response.}}}
We next ask whether the magnitude of the endpoint characterization is related to the subsequent recovery response under the shared intervention. Following the design in Sections~\ref{sec:controlled-recovery} and~\ref{sec:exp-setup}, we measure the Spearman association between endpoint $P_0$ and $H_T$ across the six compression methods within each backbone--retention setting. Across all four settings, $P_0$ is positively rank-associated with $H_T$ (Table~\ref{tab:pressure-recovery}).

This analysis has a deliberately limited role: it connects the functional
structure characterized at the compression endpoint to subsequent recovery, rather than establishing an endpoint predictor beyond scalar loss. Indeed, $P_0$ is closely associated with endpoint loss
(Appendix~\ref{app:pressure_loss_relation}), and we do not claim incremental predictive value over endpoint loss for $H_T$. Table~\ref{tab:recovery_aware_comparison} places this association alongside downstream performance before and after recovery. Larger endpoint pressure can coincide with a larger recovery response without implying stronger final downstream performance, since endpoints that begin from different quality levels may still finish at different recovered performance levels.

Thus, the $P_0$--$H_T$ association provides evidence that the endpoint functional characterization is related to the subsequent recovery process, but not as a ranking criterion for recovered models. Whether the evolving recovery-pressure characterization provides information complementary to scalar loss is evaluated separately in Section~\ref{sec:beyond-loss}.

%% file: text/recovery_dynamics.tex
\begin{figure}
    \centering
    \includegraphics[width=0.99\linewidth]{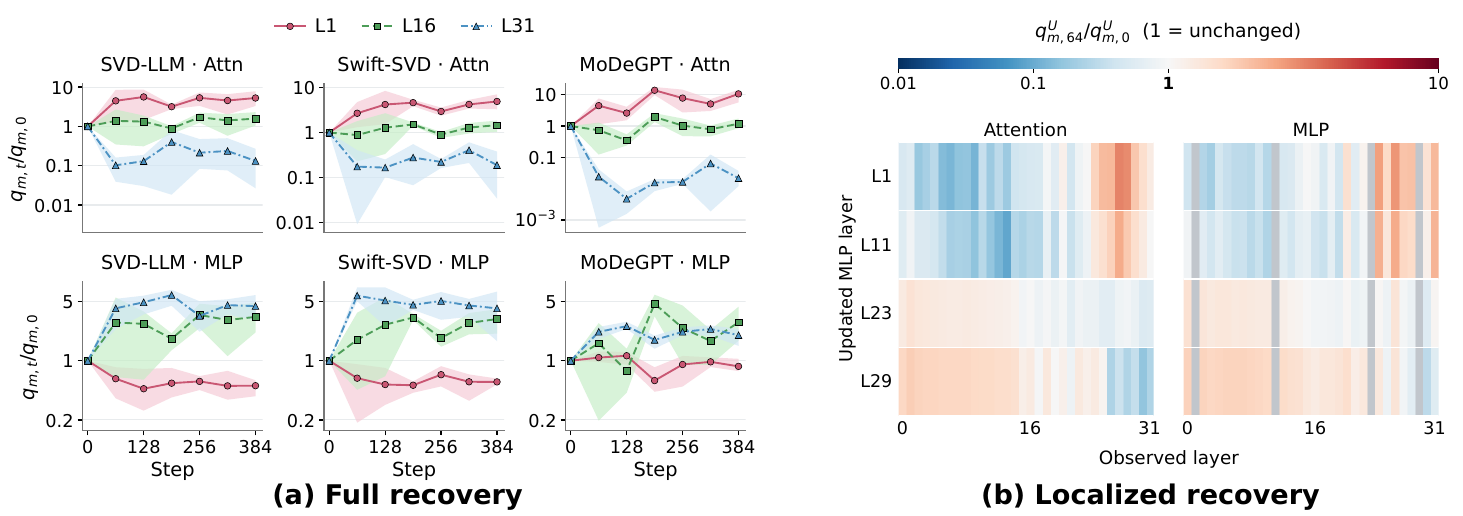}
\caption{\textbf{Nonuniform pressure reorganization under full and localized recovery.} \textbf{(a)} Baseline-normalized module shares on Llama-3.1-8B ($r=0.8$, C4; mean $\pm$ range across 3 seeds). Flat line at 1.0 indicates uniform rescaling. \textbf{(b)} Share changes across unupdated modules after 64-step single-MLP recovery (SVD-LLM; 1.0 = no change).}
    \label{fig:s6_module_trajectories}
\end{figure}

We next examine how recovery pressure evolves over time---whether recovery
mainly changes its overall magnitude or also reorganizes how it is distributed
across modules.

\paragraph{\textit{Recovery changes module-wise structure beyond changes in overall magnitude.}}

If recovery changed all module responses proportionally, then
\[
\|p_{m,t}\|_2^2=\alpha_t\|p_{m,0}\|_2^2
\]
for every module, leaving the normalized shares unchanged:
$q_{m,t}=q_{m,0}$.

Figure~\ref{fig:s6_module_trajectories}(a) instead shows heterogeneous module-wise evolution. Across methods, attention modules in earlier layers and MLP modules in later layers tend to gain relative pressure share, while attention modules in later layers tend to lose share; several trajectories are also non-monotonic over recovery. Notably, this redistribution can occur even as the corresponding absolute responses decrease: for SVD-LLM and Swift-SVD, all tracked module responses are lower in absolute magnitude at the end, while their relative shares continue to shift. These observations show that recovery redistributes the functional response across modules rather than changing all module contributions proportionally. 


This reorganization is not limited to the trajectories visualized in
Fig.~\ref{fig:s6_module_trajectories}(a): across the complete $r=0.8$ cohort of six methods and two backbones, all 12 source-to-final profile changes exceed their
paired-document error references (Appendix~\ref{app:s6_module_dynamics}), confirming this redistribution is ubiquitous.

\paragraph{\textbf{\textit{Local updates reshape recovery pressure across the network.}}} We next ask whether the module-wise reorganization observed above is confined to the modules being updated. We restrict recovery to a single MLP down-projection at layer 1, 11, 23, or 29 for 64 updates, freeze all remaining parameters, and measure the resulting changes only at modules that are not directly updated.

Figure~\ref{fig:s6_module_trajectories}(b) shows that these localized
interventions nevertheless produce distributed changes across both attention
and MLP modules. The response also depends strongly on the update location:
earlier and later interventions induce distinct, and in several regions
opposing, changes in module-wise pressure share. Thus, changing a single
recovery site can redistribute recovery pressure well beyond the updated module. 

%% file: text/findings/finding_4.tex
The preceding results establish that recovery pressure is a reproducible functional structure that evolves throughout post-compression recovery. We next ask whether tracking this structure offers a complementary view of recovery dynamics beyond the scalar loss trajectory.

Directly isolating the complementary value of recovery pressure is challenging because different recovery trajectories do not traverse identical loss values at the same step. We therefore frame recovery stopping as a controlled diagnostic setting: using identical saved recovery trajectories and checkpoints, we compare a validation-loss-only criterion (\textsc{Loss}) against a pressure-informed criterion (\textsc{Loss+P}) that incorporates the evolving pressure characterization alongside loss observations.

Under this setup, optimization, checkpoints, and available loss metrics remain strictly fixed, making recovery pressure the sole additional variable. \textsc{Loss+P} augments \textsc{Loss} by requiring both pressure magnitude and its module-wise profile to stabilize. Both rules are calibrated under identical intervention and loss-preservation constraints, and we further sweep across nearby loss operating points to verify that the observed differences are not specific to a single threshold (Appendix~\ref{app:pressure-stopping}).


\paragraph{\textbf{\textit{Recovery pressure provides a complementary view of recovery progress alongside scalar loss.}}} Adding pressure changes the mean selected recovery point in all six settings (Table~\ref{tab:downstream-main}). In five settings, \textsc{Loss+P} stops earlier, reducing recovery updates by $8.57$--$45.56\%$ relative to \textsc{Loss}; on Qwen3-8B-Base at $r=.8$, it instead continues slightly longer. Despite these changes, the mean Avg7 difference between the two rules remains within $0.223$ percentage points in every setting.

Importantly, pressure does not simply impose a more aggressive stopping rule. Across recovery trajectories, it can either advance or delay stopping, and this behavior persists across independent recovery seeds and nearby loss operating points (Appendix~\ref{app:pressure-stopping}). Because the two rules observe the same recovery checkpoints and loss values, these differences show that the evolving pressure characterization can lead to different recovery decisions even under the same observed loss trajectory. This complementarity does not imply uniform dominance over loss: for example, at Llama-3.1-8B $r=0.4$, \textsc{Loss+P} reaches similar Avg7 with fewer updates but a higher C4 PPL. Rather, pressure provides an complementary functional view of recovery progress alongside scalar loss.

\begin{table*}[t]
\centering
\caption{
\textbf{Controlled evaluation of pressure-informed stopping.}
Loss and Loss+P are evaluated on identical recovery trajectories and checkpoints, differing only by incorporating pressure dynamics. Each row averages six compression methods (Full uses 384 updates at $r=0.8/0.6$ and 1024 at $r=0.4$). Saving is relative to Loss; C4 PPL uses 256 validation windows of 256 tokens.
}
\label{tab:downstream-main}
\small
\setlength{\tabcolsep}{3.2pt} 
\renewcommand{\arraystretch}{1.12}

\begin{tabular}{ll c r >{\columncolor{pressureblue}}c >{\columncolor{pressureblue}}c c c c c c}
\toprule
&& \multicolumn{3}{c}{\textbf{Recovery updates}}
 & \multicolumn{3}{c}{\textbf{Avg7} $\uparrow$}
 & \multicolumn{3}{c}{\textbf{C4 PPL} $\downarrow$} \\
\cmidrule(lr){3-5} \cmidrule(lr){6-8} \cmidrule(lr){9-11}
Backbone & $r$
& Loss 
& \multicolumn{1}{>{\columncolor{pressurebluehead}}c}{\textbf{Loss+P}} 
& \multicolumn{1}{>{\columncolor{pressurebluehead}}c}{\textbf{\textbf{\shortstack{Update Red-\\uction(\%)}}}}
& Full & Loss & Loss+P
& Full & Loss & Loss+P \\
\midrule
\multirow{3}{*}{\shortstack[l]{Llama-3.1\\8B}} 
 & 0.8 & 341.3 & 256.0 & \textbf{25.00} & 58.351 & 58.309 & 58.331 & 16.70 & 16.70 & 16.83 \\
 & 0.6 & 362.7 & 288.0 & \textbf{20.59} & 49.803 & 49.825 & 49.782 & 24.66 & 24.69 & 25.10 \\
 & 0.4 & 960.0 & 522.7 & \textbf{45.56} & 40.640 & 40.520 & 40.516 & 44.22 & 44.52 & 48.49 \\
\midrule
\multirow{3}{*}{\shortstack[l]{Qwen3-8B\\Base}} 
 & 0.8 & 330.7 & 341.3 & -3.23          & 61.029 & 60.794 & 61.017 & 16.57 & 16.58 & 16.59 \\
 & 0.6 & 373.3 & 341.3 & \textbf{8.57}  & 53.194 & 53.234 & 53.156 & 22.01 & 22.07 & 22.12 \\
 & 0.4 & 906.7 & 725.3 & \textbf{20.00} & 44.725 & 44.711 & 44.753 & 32.54 & 32.88 & 33.41 \\
\bottomrule
\end{tabular}
\end{table*}

\paragraph{\textbf{\textit{Implications for recovery-aware evaluation.}}}
Beyond revealing the functional structure of post-compression recovery, our characterization provides a standardized within-backbone interface that can be applied consistently from the compression endpoint throughout subsequent recovery across heterogeneous low-rank methods. This connection matters because recovery can substantially reshape the relative performance of compressed models, making endpoint-only evaluation incomplete when recovery is part of the compression pipeline. At the endpoint, recovery pressure characterizes the functional structure left by compression, and, during recovery, tracks how that structure is reorganized and provides a complementary functional characterization of recovery progress alongside scalar loss. This can be cruicial when designing post-compression recovery awared low-rank methods. 


%% file: text/conclusion.tex
We introduced recovery pressure to characterize the compression-induced functional structure underlying heterogeneous post-compression recovery responses in low-rank LLMs. Using a common characterization interface across low-rank methods, we show that this structure is reproducible across independent probes and can be tracked consistently from the compression endpoint throughout recovery. Recovery does not simply reduce this structure uniformly: its module-wise organization is systematically reshaped over time, and localized updates induce distributed responses beyond the modules directly updated. We further find that endpoint recovery pressure is associated with subsequent recovery response, while its evolution provides a complementary functional view of recovery progress alongside scalar loss. These results show that post-compression recovery is not merely a process of scalar loss evolution, but a structured functional reorganization of the compressed model. More broadly, recovery pressure provides a common lens connecting the compression endpoint with subsequent recovery, motivating recoverability as an explicit dimension of low-rank compression evaluation and design.

%% file: text/appendix_design.tex
\subsubsection{Recovery-Pressure Measurement}
\label{app:pressure}



\paragraph{Measurement.}
Recovery pressure denotes the compression-induced functional structure studied in the main text. We characterize it through matched perturbations at corresponding module outputs of a compressed or recovering model $c_t$ and its fixed dense parent $d_0$.

For either model $M\in\{c_t,d_0\}$, let $z_m(x)\in\mathbb{R}^{L_x\times d_m}$ denote the output of module $m$ before residual addition. We apply a channel-wise multiplicative gate $g_m\in\mathbb{R}^{d_m}$, with $\widetilde z_{m,\tau,j}(x) = g_{m,j} \cdot z_{m,\tau,j}(x),$ where $\tau\in\{1,\ldots,L_x\}$ indexes tokens and $j\in\{1,\ldots,d_m\}$ indexes output channels. The same gate coordinate is shared across tokens, and $g_m=\mathbf 1$ recovers the unperturbed module output. For an input $x$,
\[
\left.\nabla_g\ell(M[g];x)\right|_{g=\mathbf 1} = \left.
\frac{\partial \ell(M[g];x)}
     {\partial g_{m,j}}
\right|_{g=\mathbf 1}
=
\sum_{\tau=1}^{L_x}
z_{m,\tau,j}(x)
\frac{\partial \ell(M;x)}
     {\partial z_{m,\tau,j}(x)} .
\]

Hence each gate derivative is the first-order loss response to an infinitesimal relative rescaling of one output channel. The gate interface is identity-centered and decomposable across modules and channels, so these local responses can be compared at the same architecture-aligned functional boundaries across compressed, recovering, and dense models. No finite perturbation magnitude needs to be chosen.

The resulting geometry is defined with respect to this fixed channel-wise gate interface. In particular, the Euclidean norm used below is part of the measurement convention and is not claimed to be invariant under arbitrary changes of output-channel basis.

For a fixed probe set $\mathcal Q$, define $a_{\mathcal Q}(M)
=
\frac{1}{|\mathcal Q|}
\sum_{x\in\mathcal Q}
\left.\nabla_g\ell(M[g];x)\right|_{g=\mathbf 1}.$
The backward characterization in Section~\ref{sec:functional-characterization} is then
\[
p_t
=
a_{\mathcal Q}(c_t)-a_{\mathcal Q}(d_0)
=
\left.
\nabla_g
\Big[
\mathcal L_{\mathcal Q}(c_t[g])
-
\mathcal L_{\mathcal Q}(d_0[g])
\Big]
\right|_{g=\mathbf 1}.
\]
Subtracting the dense response removes components shared with the dense parent:
$p_t$ vanishes when $c_t$ and $d_0$ have identical local responses even if
their individual gate gradients are nonzero. We summarize the contrast by
$P_t=\|p_t\|_2$ and
$q_{m,t}=\|p_{m,t}\|_2^2/\|p_t\|_2^2$.

The probe set, dense reference, module boundaries, channel ordering, and
response definition are fixed throughout recovery. In particular,
\[
p_t-p_0
=
a_{\mathcal Q}(c_t)-a_{\mathcal Q}(c_0),
\]
so temporal changes are measured under the same convention.

For Proposition~\ref{prop:local-sensitivity}, let
$\Delta_t(g)=
\mathcal L_{\mathcal Q}(c_t[g])-
\mathcal L_{\mathcal Q}(d_0[g])$.
Differentiability at $g=\mathbf 1$ gives
$D\Delta_t(\mathbf 1)[v]=p_t^\top v$.
By Cauchy--Schwarz,
$|p_t^\top v|\le \|p_t\|_2$ for $\|v\|_2\le1$, with equality for
$v=p_t/\|p_t\|_2$ when $p_t\neq0$. Hence
\[
P_t
=
\max_{\|v\|_2\le1}
\big|D\Delta_t(\mathbf 1)[v]\big|.
\]

\paragraph{Overall magnitude and module-wise organization.}
We summarize the overall magnitude by $P_t=\|p_t\|_2$. Writing $p_t=\operatorname{concat}_{m\in\mathcal M}p_{m,t}$, the module-wise pressure share is $q_{m,t}=\|p_{m,t}\|_2^2/\|p_t\|_2^2$ for $P_t>0$, with $\sum_m q_{m,t}=1$. Thus $P_t$ captures overall magnitude, while $q_{m,t}$ describes the relative organization across modules.

\begin{proposition}[Local sensitivity interpretation]
\label{prop:pressure-local}
If $\Delta_t$ is differentiable at $g=\mathbf 1$, then for any perturbation
direction $v$,
\[
D\Delta_t(\mathbf 1)[v]=p_t^\top v,
\qquad
P_t
=
\max_{\|v\|_2\leq 1}
\left|D\Delta_t(\mathbf 1)[v]\right|.
\]
Thus $P_t$ is the largest first-order compressed--dense loss-response
difference under a unit Euclidean gate perturbation.
\end{proposition}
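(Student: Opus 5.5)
The plan has two parts. First I identify the directional derivative with the gradient pairing. Then I recover $P_t$ as a dual-norm maximization via Cauchy--Schwarz.

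\emph{Step 1: the derivative is a gradient pairing.} Differentiability of $\Delta_t$ at $g=\mathbf 1$ means there is a linear map $D\Delta_t(\mathbf 1)$ on the gate space with
\[
\Delta_t(\mathbf 1+v)=\Delta_t(\mathbf 1)+D\Delta_t(\mathbf 1)[v]+o(\|v\|_2).
\]
In the standard Euclidean coordinates (channel gates concatenated over $m\in\mathcal M$), this linear map is represented by the gradient $\nabla_g\Delta_t|_{g=\mathbf 1}$. By Definition~\ref{def:backward-pressure}, that gradient is exactly $p_t$. Hence $D\Delta_t(\mathbf 1)[v]=p_t^\top v$ for every direction $v$.

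It is worth noting, as in the channel-level formula above, that $p_t=a_{\mathcal Q}(c_t)-a_{\mathcal Q}(d_0)$. This follows from linearity of differentiation applied to the difference of the two probe losses. It confirms that $\nabla_g\Delta_t$ is well defined whenever each term is differentiable, though only differentiability of $\Delta_t$ itself is needed.

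\emph{Step 2: the maximization.} For any $v$ with $\|v\|_2\le 1$, Cauchy--Schwarz gives
\[
\bigl|p_t^\top v\bigr|\le\|p_t\|_2\|v\|_2\le\|p_t\|_2=P_t,
\]
so the supremum is at most $P_t$. For attainment I split into two cases:
\begin{itemize}
\item If $p_t\neq 0$, take $v^\star=p_t/\|p_t\|_2$. This is a unit vector with $p_t^\top v^\star=\|p_t\|_2$.
\item If $p_t=0$, both sides are zero, and any $v$, e.g.\ $v=0$, attains the maximum.
\end{itemize}
The unit ball is compact and $v\mapsto|p_t^\top v|$ is continuous, so the maximum is indeed attained and ``$\max$'' is justified. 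The interpretive sentence then simply restates this equality.

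There is no real obstacle. The only point needing care is the convention that the Euclidean inner product on the concatenated gate vector is the one in which $p_t$ represents the derivative. The identification in Step~1 is with respect to this fixed gate parametrization, and $P_t$ is the dual norm of $D\Delta_t(\mathbf 1)$ under that same Euclidean norm. No basis-invariance claim is made.
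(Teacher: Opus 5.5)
Your proposal is correct and follows the paper's proof essentially step for step. Like the paper, you identify $D\Delta_t(\mathbf 1)[v]$ with $p_t^\top v$ through the definition of $p_t$ as the gradient, bound $|p_t^\top v|$ by $P_t$ via Cauchy--Schwarz, attain the bound at $v=p_t/\|p_t\|_2$, and treat $p_t=0$ separately; your remarks on compactness and the Euclidean gate convention are harmless additions, the latter of which the paper states in its surrounding text.
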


\begin{proof}
Because $\Delta_t$ is differentiable at $g=\mathbf 1$, its directional
derivative along any direction $v$ is given by $D\Delta_t(\mathbf 1)[v]
=
\left.
\nabla_g\Delta_t(g)
\right|_{g=\mathbf 1}^{\!\top} v
=
p_t^\top v,$ where the last equality follows from the definition
$p_t=\left.\nabla_g\Delta_t(g)\right|_{g=\mathbf 1}$. For any $v$ satisfying $\|v\|_2\leq 1$, Cauchy--Schwarz gives
\[
\left|D\Delta_t(\mathbf 1)[v]\right|
=
|p_t^\top v|
\leq
\|p_t\|_2\|v\|_2
\leq
\|p_t\|_2
=
P_t.
\]
Hence,
\[
\max_{\|v\|_2\leq 1}
\left|D\Delta_t(\mathbf 1)[v]\right|
\leq P_t.
\]

If $p_t\neq 0$, choosing
$v=p_t/\|p_t\|_2$ gives $\|v\|_2=1$ and
\[
\left|D\Delta_t(\mathbf 1)[v]\right|
=
\left|
p_t^\top\frac{p_t}{\|p_t\|_2}
\right|
=
\|p_t\|_2
=
P_t,
\]
so the upper bound is attained. If $p_t=0$, both sides are zero. Therefore, $P_t
=
\max_{\|v\|_2\leq 1}
\left|D\Delta_t(\mathbf 1)[v]\right|.$
\end{proof}

Signed response differences are averaged before taking the norm because the
target is the response contrast of the mean probe objective. Averaging
per-window norms would instead retain opposing response directions as positive
contributions.

\paragraph{Comparability along recovery.}
The probe set, dense reference, module-output locations, channel ordering,
perturbation interface, and loss reduction remain fixed throughout recovery;
only $c_t$ changes. In particular,
$p_t-p_0=a_{\mathcal Q}(c_t)-a_{\mathcal Q}(c_0)$, so the dense contribution
cancels in temporal comparisons. The same measurement can therefore be applied
throughout recovery on a common scale, while both $P_t$ and $q_{m,t}$ may
change non-monotonically.

%% file: text/appendix_experiment.tex
\input{tables/full_pressure_recovery}

\subsubsection{Full Endpoint and Recovery Results}
\label{app:full_endpoint_recovery}

Table~\ref{tab:main_recoverability} reports the complete six-method comparison for both backbones at $r=0.8$ and $r=0.6$. Endpoint Avg7 is measured once per frozen Source; recovered Avg7 is the mean and sample SD over three paired recovery seeds. Ranks are recomputed within each backbone--retention setting from unrounded scores. The $P_0$ column is the fixed Q0 endpoint-pressure readout and is not an additional recovery score.

\subsubsection{Recovery Protocol}
\label{app:recovery_protocol}
The shared recovery intervention takes 384 optimizer updates, each with an effective batch of eight 2,048-token windows (microbatch one and eight accumulation steps). It consumes 3,072 distinct windows once, totaling 6,291,456 input positions. Only native BASIS adapters are trainable; the compressed Source weights and selected outer coordinates remain frozen. We use AdamW with a constant learning rate of $5\times10^{-4}$, $\beta=(0.9,0.999)$, $\epsilon=10^{-8}$, zero weight decay, and global gradient clipping at 1. Adapter weights and optimizer moments use BF16, with BF16 autocast. The input/down adapter factor is initialized from $\mathcal N(0,0.02^2)$ and the output/up factor at zero. Recovery seeds are 7711, 7712, and 7713; within a seed and backbone, all methods use the same C4 windows and order. Table~\ref{tab:recovery_protocol_capacity} reports the realized adapter capacity. The budget rule targets 11 million trainable parameters within 5\% for each Source. Here $r_a$ is the adapter middle-map rank at a module, and the implementation scales its low-rank update by $\alpha/r_a=2$.

\begin{table*}[t]
\centering
\small
\setlength{\tabcolsep}{5pt}
\caption{\textbf{Native BASIS capacity for the 384-update C4 recovery.}
Each cell gives adapter rank $r_a$ / exact trainable parameter count for one
frozen compression endpoint. Counts are identical across the three recovery
seeds. Shared Basis Sharing adapters are counted
once per shared basis.}
\label{tab:recovery_protocol_capacity}
\begin{tabular*}{\textwidth}{@{\extracolsep{\fill}}lrrrr@{}}
\toprule
& \multicolumn{2}{c}{Llama-3.1-8B} & \multicolumn{2}{c}{Qwen3-8B-Base} \\
\cmidrule(lr){2-3}\cmidrule(l){4-5}
Method & $r=0.8$ & $r=0.6$ & $r=0.8$ & $r=0.6$ \\
\midrule
ASVD          & 19 / 10,725,120 & 21 / 11,137,728 & 31 / 10,963,770 & 20 / 11,261,960 \\
SVD-LLM       & 14 / 10,958,080 & 19 / 11,151,936 & 13 / 11,191,752 & 17 / 10,975,608 \\
Basis Sharing & 18 / 10,892,160 & 25 / 11,083,200 & 17 / 11,122,488 & 22 / 10,794,960 \\
MoDeGPT       &  4 / 10,878,704 & 5--6 / 10,994,530 & 4 / 10,826,464 & 5--6 / 10,990,644 \\
GFWSVD        & 14 / 10,958,080 & 19 / 11,151,936 & 13 / 11,191,752 & 17 / 10,975,608 \\
Swift-SVD     & 14 / 10,958,080 & 19 / 11,151,936 & 13 / 11,191,752 & 17 / 10,975,608 \\
\bottomrule
\end{tabular*}
\end{table*}

\subsubsection{Pressure-Informed Stopping Protocol}
\label{app:pressure-stopping}

Prior to applying either rule, a fixed 64-window validation set ($V_{64}$) is split into 32 monitoring windows ($M_{32}$) and 32 held-out evaluation windows ($E_{32}$). Both rules evaluate loss on $M_{32}$ to determine stopping, reserving $E_{32}$ strictly for post-hoc checkpoint evaluation. \textsc{Loss+P} additionally measures recovery pressure on a separate, non-overlapping 16-window probe ($Q_{16} \cap V_{64} = \emptyset$). All windows contain 256 tokens. $M_{32}$ loss and $Q_{16}$ pressure are recorded at initialization and every 64 recovery updates, reusing identical windows throughout. 

At checkpoint $j$, \textsc{Loss} triggers stopping when validation loss improvement $L_{j-1}-L_j$ remains at or below its threshold for $k$ consecutive 64-update intervals. \textsc{Loss+P} applies a parallel loss condition with the NLL threshold, while further requiring both the relative pressure-magnitude change $\frac{|P_j-P_{j-1}|}{P_0}$ and module-profile shift $\frac{1}{2}\sum_m |q_{m,j}-q_{m,j-1}|$ to stay at or below their respective thresholds over those same $k$ intervals. If a rule does not trigger early, the full-budget checkpoint is returned.

\subsubsection{Relationship Between Endpoint Magnitude and Scalar Endpoint Quality}
\label{app:pressure_loss_relation}

As expected, endpoint magnitude is closely aligned with scalar endpoint
quality. Within each backbone and validation set, the dense-reference loss is
fixed, so $\rho(P_0,D_0)=\rho(P_0,L_0)$ for
$D_0=L_0-\mathcal{L}_{\mathrm{val}}(d_0)$. The observed associations are
consistent with the intuitive tendency for more degraded endpoints to exhibit
both larger functional discrepancies and greater room for subsequent recovery.

\begin{table*}[t]
\centering
\footnotesize
\setlength{\tabcolsep}{4pt}
\caption{
\textbf{Endpoint magnitude, scalar endpoint quality, and subsequent recovery.} Panel A reports within-setting Spearman correlations across the six compression methods. $L_0$ is held-out endpoint NLL and $H_{384}=L_0-L_{384}$ is the mean held-out loss reduction over three recovery seeds. Q0 denotes the original 16-window pressure probe; Q128/V128 independently remeasures all endpoints using separate pressure and validation sets. Panel B reports a retrospective method-held-out prediction audit over all
24 Q128/V128 endpoints.
}
\label{tab:pressure_loss_relation}

\begin{tabular}{llcrrr}
\toprule
\multicolumn{6}{l}{\textbf{A. Within-setting endpoint rank associations}}\\
\midrule
Probe & Backbone & $r$
& $\rho(P_0,L_0)$
& $\rho(P_0,H_{384})$
& $\rho(L_0,H_{384})$ \\
\midrule
Q0 & Llama & 0.8 & 1.000 & 1.000 & 1.000 \\
Q0 & Llama & 0.6 & 0.886 & 0.886 & 1.000 \\
Q0 & Qwen  & 0.8 & 0.771 & 0.771 & 1.000 \\
Q0 & Qwen  & 0.6 & 0.943 & 0.829 & 0.943 \\
\midrule
Q128/V128 & Llama & 0.8 & 1.000 & 1.000 & 1.000 \\
Q128/V128 & Qwen  & 0.8 & 0.771 & 0.771 & 1.000 \\
\bottomrule
\end{tabular}

\par\smallskip

\begin{tabular}{lrrrr}
\toprule
\multicolumn{5}{l}{\textbf{B. Method-held-out prediction: mean absolute error}}\\
\midrule
Target & Loss & Loss+$\log P_0$ & Loss shape & Loss shape+$\log P_0$ \\
\midrule
$D_T$ (nats/token) & 0.096849 & 0.079426 & 0.069537 & 0.067217 \\
$F_T$ (fraction)   & 0.111136 & 0.108753 & 0.050936 & 0.060390 \\
\bottomrule
\end{tabular}
\end{table*}

The method-held-out audit in Panel B is consistent with this picture: adding $\log P_0$ improves prediction of the remaining loss gap $D_T$ under both loss baselines, while gains for the recovered fraction $F_T$ depend on the loss parameterization. In the main analysis, the $P_0$--$H_T$ association serves to connect the endpoint functional characterization to subsequent recovery, whereas complementary value alongside loss is evaluated directly through the recovery-decision experiments in Section~\ref{sec:beyond-loss}.

\subsubsection{Coordinate-Drift Diagnostics for Factor-Update Recovery}
\label{app:fixed_coordinate}

\paragraph{Factor updates shift coordinates.}
We quantify coordinate integrity via the \textbf{collapse score} (negative log-volume ratio). Figure~\ref{fig:app_real_factor_geometry_uv} shows that unconstrained tuning (U/V/UV-LoRA) induces significant \textbf{coordinate drift}, shifting ECDFs toward higher collapse values. BASIS avoids this by keeping the outer factors fixed and preserving the selected SVD coordinates by construction.

\begin{figure}
    \centering
    \includegraphics[width=0.99\linewidth]{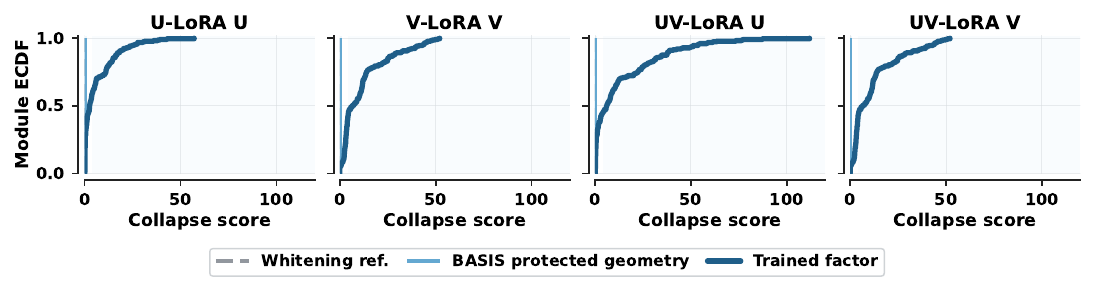}
    \caption{\textbf{Factor-update recovery shifts selected coordinates.}
    Collapse-score ECDFs on Llama-3.1-8B show that directly updated factors in U-LoRA, V-LoRA, and UV-LoRA move away from the whitening reference, while BASIS keeps the protected outer factors fixed.}
    \label{fig:app_real_factor_geometry_uv}
\end{figure}

\input{tables/main_numeric_geometry_shift}

\paragraph{Coordinate-shift diagnostic.}
We project learned endpoints back into the Source-defined fixed-coordinate subspace (Table~\ref{tab:geometry_shift_main}). While U/V-LoRA and JFT achieve competitive accuracy in learned coordinates, their projected endpoints show large Avg7 drops and severe PPL spikes; JFT drops from 50.26 to 40.85 Avg7. This suggests factor-update endpoints are sensitive to coordinates outside the Source coordinate system, whereas BASIS remains invariant by construction.





\subsubsection{Endpoint-Profile Reproducibility and Probe-Size Sensitivity}
\label{app:source_profile_reproducibility}

\paragraph{Reproducibility across independent probes.}
We compare the full module-share profiles $\{q_{m,0}\}_{m\in\mathcal M}$
obtained from the original 64-window aggregate (Q64) and an independently
sampled 128-window probe (Q128) at $r=0.8$, holding the endpoint, dense
reference, perturbation interface, and measurement definition fixed.

\begin{table}[t]
\centering
\small
\setlength{\tabcolsep}{7pt}
\renewcommand{\arraystretch}{1.08}
\caption{\textbf{Reproducibility of endpoint module organization.}
Spearman rank correlation and cosine similarity compare module-share profiles
from Q64 and independent Q128 probes over all attention and MLP modules.}
\label{tab:source_profile_reproducibility}
\begin{tabular}{lrrrr}
\toprule
& \multicolumn{2}{c}{Llama-3.1-8B}
& \multicolumn{2}{c}{Qwen3-8B-Base} \\
\cmidrule(lr){2-3}\cmidrule(lr){4-5}
Method & Spearman & Cosine & Spearman & Cosine \\
\midrule
ASVD          & 0.996 & 0.963    & 0.993 & 0.996 \\
SVD-LLM       & 0.968 & 0.983    & 0.978 & 0.997 \\
Basis Sharing & 0.988 & 0.996    & 0.991 & 0.999 \\
MoDeGPT       & 0.358 & 0.337    & 0.493 & 0.997 \\
GFWSVD        & 0.991 & $>0.999$ & 0.988 & $>0.999$ \\
Swift-SVD     & 0.970 & 0.987    & 0.984 & 0.997 \\
\bottomrule
\end{tabular}
\end{table}

Ten of twelve endpoints show strong rank agreement
(Spearman $0.968$--$0.996$). MoDeGPT is the main exception; Qwen MoDeGPT
retains high cosine similarity despite weaker rank agreement, whereas Llama
MoDeGPT is less stable under both measures. Thus, module organization is
reproducible for most tested endpoints, with method-dependent exceptions.

\paragraph{Sensitivity to probe size.}
Because Table~\ref{tab:main_recoverability} computes $P_0$ using the original
16-window probe (Q0), we additionally compare its module-share profile with
Q128.

\begin{table}[t]
\centering
\small
\setlength{\tabcolsep}{7pt}
\renewcommand{\arraystretch}{1.08}
\caption{\textbf{Module-profile sensitivity to the original probe size.}
Module-share agreement between the original 16-window Q0 and independent Q128
probes at $r=0.8$.}
\label{tab:source_profile_reproducibility_q0}
\begin{tabular}{lrrrr}
\toprule
& \multicolumn{2}{c}{Llama-3.1-8B}
& \multicolumn{2}{c}{Qwen3-8B-Base} \\
\cmidrule(lr){2-3}\cmidrule(lr){4-5}
Method & Spearman & Cosine & Spearman & Cosine \\
\midrule
ASVD          & 0.958 & 0.477    & 0.948 & 0.986 \\
SVD-LLM       & 0.913 & 0.739    & 0.936 & 0.999 \\
Basis Sharing & 0.960 & 0.905    & 0.948 & 0.998 \\
MoDeGPT       & 0.201 & 0.338    & 0.539 & 0.986 \\
GFWSVD        & 0.968 & 0.983    & 0.907 & $>0.999$ \\
Swift-SVD     & 0.912 & 0.756    & 0.940 & 0.999 \\
\bottomrule
\end{tabular}
\end{table}

For the ten non-MoDeGPT endpoints, Q0 preserves the broad module ordering
(Spearman $0.907$--$0.968$), while cosine agreement is more variable,
especially on Llama-3.1-8B. This suggests that coarse module ordering is less
sensitive to probe size than the exact allocation of profile mass, motivating
Q128 for the detailed organization analysis.

\subsubsection{Forward--Backward Characterization Alignment}
\label{app:forward_backward_alignment}

\paragraph{Forward characterization.}
For the complementary forward characterization, we apply matched local
perturbations to each compressed endpoint and its fixed dense reference and
measure the induced output-response discrepancy. Let $e_m^{\mathrm F}$ denote
the mean squared discrepancy for module $m$, aggregated over matched input
windows, perturbation directions, tokens, and fixed output sketches. We use
the normalized module share
\[
q_m^{\mathrm F}
=
\frac{e_m^{\mathrm F}}
     {\sum_{j\in\mathcal M} e_j^{\mathrm F}}
\]
and compare it with the primary backward characterization $q_{m,0}$ from
Section~\ref{sec:functional-characterization} on the same inputs.

\begin{table*}[t]
\centering
\small
\setlength{\tabcolsep}{5pt}
\renewcommand{\arraystretch}{1.06}

\caption{\textbf{Forward--backward characterization alignment.}
Module-wise agreement on matched two-window inputs at $r=0.8$. This is a separate small-input/module-level audit and is not the source data summarized in Table 1.}
\label{tab:forward_backward_alignment_full}

\begin{tabular}{lccc|ccc}
\toprule
& \multicolumn{3}{c}{\textbf{Llama-3.1-8B}}
& \multicolumn{3}{c}{\textbf{Qwen3-8B-Base}} \\
\cmidrule(lr){2-4}\cmidrule(lr){5-7}
Method
& Spearman & Cosine & Top-8
& Spearman & Cosine & Top-8 \\
\midrule
\textbf{ASVD}
& 0.940$^{*}$ & 0.971 & 5/8
& 0.662 & 0.783 & 7/8 \\
\textbf{SVD-LLM}
& 0.614 & 0.690 & 6/8
& 0.811 & 0.813 & 5/8 \\
\textbf{Basis Sharing}
& 0.738 & 0.983 & 7/8
& 0.877 & 0.822 & 6/8 \\
\textbf{MoDeGPT}
& 0.934 & 0.992 & 7/8
& 0.714 & 0.481 & 3/8 \\
\textbf{GFWSVD}
& 0.869 & 0.955 & 6/8
& 0.848 & 0.592 & 5/8 \\
\textbf{Swift-SVD}
& 0.626 & 0.739 & 6/8
& 0.825 & 0.821 & 5/8 \\
\bottomrule
\end{tabular}
\end{table*}

\paragraph{Additional checks.}
Exact share agreement is less uniform than rank agreement: Qwen MoDeGPT and
GFWSVD, for example, have cosine similarities of $0.481$ and $0.592$, showing
that the two characterizations need not assign the same mass even when their
module rankings remain positively aligned. Comparing the same forward
characterization against the independent Q128 backward characterization used
in Fig.~\ref{fig:pressure_organization} yields positive Spearman correlations
for all twelve endpoints ($0.506$--$0.829$, median $0.601$). The smaller-scale
Llama ASVD audit also preserves the qualitative response discrepancy. Together,
these checks support using the two characterizations as related but distinct
views of the same compression-induced functional structure.

\paragraph{Localized recovery: energy responses and comparison protocol}
\label{app:s6_local_reorganization_protocol}

Figure~\ref{fig:s6_local_energy} complements the relative share maps by tracking absolute pressure energy ($E_{m,t}$). Across all four localized interventions, single-module updates induce widespread absolute-energy attenuation across unupdated layers, though different intervention sites yield distinct spatial decay profiles. Crucially, this confirms that a module's relative pressure share can increase even while its absolute energy declines. For post hoc evaluation, responses are benchmarked over 60 unupdated modules common to all interventions, with attenuation significance validated against 1,999 paired bootstrap resamples and multi-seed directional consistency checks.

\begin{figure}[t]
\centering
\includegraphics[width=\linewidth]{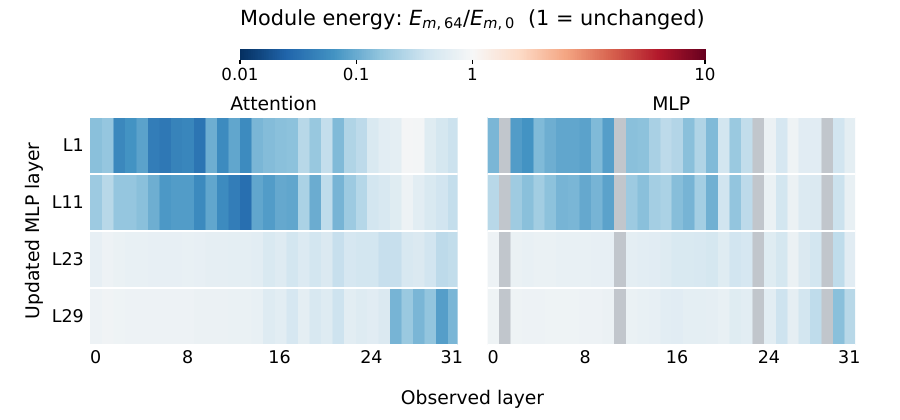}
\caption{\textbf{Absolute-energy responses to localized recovery.} Normalized energy change ($E_{m,64}/E_{m,0}$ on Q128; 3-seed mean) across 60 unupdated modules under single-MLP BASIS recovery (SVD-LLM, Llama-3.1-8B, $r=0.8$). Rows denote updated layers; blue indicates attenuation ($<1.0$) and gray marks excluded sites. Relative share growth can coexist with absolute energy decay.}
\label{fig:s6_local_energy}
\end{figure}

\subsubsection{Recovery-Profile Dynamics and Reproducibility}
\label{app:s6_module_dynamics}
The new trajectory analysis covers SVD-LLM, Swift-SVD, and MoDeGPT on
Llama-3.1-8B at retention 0.8, with C4 recovery seeds 7711--7713 and
steps 0, 64, 128, 192, 256, 320, and 384. Source and dense states are
shared across seeds. All 64 module interfaces are measured; only the
preselected layers 1, 16, and 31 appear in the main figure. No new
recovery training is performed. The normalized share is computed within
each seed before aggregation. The full-module atlas reports all layers,
with each seed retained and individual subplot y limits explicitly noted.

For each seed and checkpoint, 999 paired document bootstrap resamples
reuse the same Q128 indices for source, recovered, and dense states.
The error reference is the 95th percentile of the centered profile-change
TV errors. Each reference is a finite-document diagnostic conditional on
the observed recovery seed; it does not provide simultaneous coverage
across all comparisons. Mean-seed profile comparisons use the same
resampled documents across seeds and average the seed-specific normalized
profiles before computing TV. They do not resample recovery seeds.
The plotted seed range is descriptive and differs from these document
error references. Across the three illustrated methods, all 18
seed-averaged source-to-checkpoint profile changes exceed the paired-document
error reference. Eight of nine seed-specific source-to-step384 comparisons
do so; the exception is MoDeGPT seed 7712, with TV 0.315 below its 0.344
reference. These counts do not imply that every seed or method has a
resolved change at every checkpoint.

\begin{table}[t]
\centering\small
\caption{Full-profile change over steps 0--384 for each selected recovery seed. Ref. is the paired-document error reference. $E\downarrow$ counts absolute-energy decreases out of 64 modules; these sign counts are descriptive.}
\begin{tabular}{@{}lrrrr@{}}
\toprule Method & Seed & TV & Ref. & $E\downarrow$ \\
\midrule
SVD-LLM & 7711 & 0.479 & 0.210 & 64 \\
SVD-LLM & 7712 & 0.414 & 0.290 & 64 \\
SVD-LLM & 7713 & 0.369 & 0.311 & 64 \\
Swift-SVD & 7711 & 0.334 & 0.255 & 64 \\
Swift-SVD & 7712 & 0.436 & 0.290 & 64 \\
Swift-SVD & 7713 & 0.376 & 0.276 & 64 \\
MoDeGPT & 7711 & 0.414 & 0.290 & 61 \\
MoDeGPT & 7712 & 0.315 & 0.344 & 61 \\
MoDeGPT & 7713 & 0.408 & 0.308 & 60 \\
\bottomrule
\end{tabular}
\end{table}
\begin{table}[t]
\centering\small
\caption{All measured checkpoints: seed-mean profile TV and its paired-document error reference. Cells contain TV / Ref.}
\begin{tabular}{@{}lrrrrrr@{}}
\toprule Method & 64 & 128 & 192 & 256 & 320 & 384 \\
\midrule
SVD-LLM & 0.355/0.130 & 0.443/0.154 & 0.398/0.177 & 0.366/0.219 & 0.415/0.148 & 0.417/0.191 \\
Swift-SVD & 0.366/0.175 & 0.410/0.122 & 0.401/0.227 & 0.318/0.184 & 0.365/0.199 & 0.379/0.254 \\
MoDeGPT & 0.312/0.217 & 0.336/0.183 & 0.487/0.237 & 0.354/0.258 & 0.311/0.299 & 0.370/0.243 \\
\bottomrule
\end{tabular}
\end{table}
\paragraph{Full endpoint cohort.}
The broader endpoint cohort uses Q128, seed 7711, and steps 0 and 384
across all six methods and both backbones at retention 0.8. All twelve
source-to-final profile changes exceed their paired-document references.
These are not three-seed, seven-checkpoint results for all twelve endpoints.
\begin{table}[t]
\centering\small
\caption{Complete endpoint cohort at retention 0.8; one recovery seed.}
\begin{tabular}{@{}llrr@{}}
\toprule Backbone & Method & TV & Ref. \\
\midrule
Llama & ASVD & 0.493 & 0.271 \\
Llama & SVD-LLM & 0.479 & 0.210 \\
Llama & Basis Sharing & 0.495 & 0.293 \\
Llama & MoDeGPT & 0.414 & 0.288 \\
Llama & GFWSVD & 0.768 & 0.265 \\
Llama & Swift-SVD & 0.334 & 0.257 \\
Qwen & ASVD & 0.502 & 0.206 \\
Qwen & SVD-LLM & 0.424 & 0.239 \\
Qwen & Basis Sharing & 0.323 & 0.154 \\
Qwen & MoDeGPT & 0.540 & 0.406 \\
Qwen & GFWSVD & 0.342 & 0.199 \\
Qwen & Swift-SVD & 0.661 & 0.354 \\
\bottomrule
\end{tabular}
\end{table}
\begin{table}[t]
\centering\small
\caption{Initial module shares (percent) for the layers in the main figure. A and M denote attention and MLP. These baselines are shared across recovery seeds.}
\begin{tabular}{@{}lrrrrrr@{}}
\toprule Method & L1 A & L16 A & L31 A & L1 M & L16 M & L31 M \\
\midrule
SVD-LLM & 0.489 & 0.124 & 5.307 & 69.276 & 0.204 & 6.368 \\
Swift-SVD & 0.518 & 0.125 & 5.214 & 69.105 & 0.204 & 6.257 \\
MoDeGPT & 0.265 & 0.131 & 24.457 & 44.557 & 0.206 & 16.196 \\
\bottomrule
\end{tabular}
\end{table}

\subsubsection{Robustness Across Recovery Seeds and Loss Operating Points}
\label{app:pressure_stopping_seeds}

We further test whether the controlled comparison in
Section~\ref{sec:beyond-loss} is sensitive to the recovery trajectory or to
the particular loss operating point used by the stopping rule.

\paragraph{Across recovery seeds.}
Table~\ref{tab:pressure_stopping_seeds} reports the full seed-level results for
the four backbone--retention settings evaluated with three recovery seeds.
Each seed row averages the six compression methods, while the mean rows
aggregate all 18 method--seed trajectories within a setting.


\begin{table*}[t]
\centering
\caption{
\textbf{Seed-level controlled stopping comparison.}
Each seed row averages six compression methods; mean rows aggregate all
18 method--seed trajectories within each setting.
Full recovery uses 384 updates at $r=.6$ and 1024 at $r=.4$.
Update reduction is measured relative to \textsc{Loss};
$\Delta$Avg7 is \textsc{Loss+P} minus \textsc{Loss}.
}
\label{tab:pressure_stopping_seeds}

\scriptsize
\setlength{\tabcolsep}{3.2pt}
\renewcommand{\arraystretch}{1.08}

\resizebox{\textwidth}{!}{%
\begin{tabular}{
llc
r
>{\columncolor{pressureblue}}r
r
r r
>{\columncolor{pressureblue}}r
r
}
\toprule
&&&
\multicolumn{3}{c}{\textbf{Recovery updates}} &
\multicolumn{4}{c}{\textbf{Avg7} $\uparrow$} \\
\cmidrule(lr){4-6}\cmidrule(lr){7-10}

Backbone & $r$ & Seed
& \textsc{Loss}
& \cellcolor{pressurebluehead}\textbf{\textsc{Loss+P}}
& \shortstack{Update reduction\\(\%)}
& Full
& \textsc{Loss}
& \cellcolor{pressurebluehead}\textbf{\textsc{Loss+P}}
& $\Delta$Avg7 \\
\midrule

\multirow{4}{*}{Llama-3.1-8B}
& \multirow{4}{*}{.6}
& 7711 & 362.7 & 288.0 & 20.59 & 49.803 & 49.825 & 49.782 & $-0.043$ \\
&& 7712 & 362.7 & 288.0 & 20.59 & 49.074 & 49.241 & 49.248 & $+0.007$ \\
&& 7713 & 384.0 & 320.0 & 16.67 & 50.118 & 50.118 & 49.648 & $-0.470$ \\
&& \textbf{Mean}
& \textbf{369.8} & \textbf{298.7} & \textbf{19.23}
& \textbf{49.665} & \textbf{49.728} & \textbf{49.559} & \textbf{$-0.169$} \\

\midrule

\multirow{4}{*}{Llama-3.1-8B}
& \multirow{4}{*}{.4}
& 7711 & 960.0 & 522.7 & 45.56 & 40.640 & 40.520 & 40.516 & $-0.004$ \\
&& 7712 & 896.0 & 586.7 & 34.52 & 41.120 & 40.677 & 40.488 & $-0.189$ \\
&& 7713 & 821.3 & 576.0 & 29.87 & 40.594 & 40.588 & 40.335 & $-0.253$ \\
&& \textbf{Mean}
& \textbf{892.4} & \textbf{561.8} & \textbf{37.05}
& \textbf{40.785} & \textbf{40.595} & \textbf{40.446} & \textbf{$-0.149$} \\

\midrule

\multirow{4}{*}{Qwen3-8B-Base}
& \multirow{4}{*}{.6}
& 7711 & 373.3 & 341.3 & 8.57 & 53.194 & 53.234 & 53.156 & $-0.078$ \\
&& 7712 & 373.3 & 362.7 & 2.86 & 53.233 & 53.248 & 53.144 & $-0.105$ \\
&& 7713 & 384.0 & 352.0 & 8.33 & 53.101 & 53.101 & 53.143 & $+0.042$ \\
&& \textbf{Mean}
& \textbf{376.9} & \textbf{352.0} & \textbf{6.60}
& \textbf{53.176} & \textbf{53.194} & \textbf{53.148} & \textbf{$-0.047$} \\

\midrule

\multirow{4}{*}{Qwen3-8B-Base}
& \multirow{4}{*}{.4}
& 7711 & 906.7 & 725.3 & 20.00 & 44.725 & 44.711 & 44.753 & $+0.042$ \\
&& 7712 & 757.3 & 672.0 & 11.27 & 45.001 & 44.675 & 44.755 & $+0.080$ \\
&& 7713 & 693.3 & 821.3 & $-18.46$ & 44.834 & 43.990 & 44.771 & $+0.782$ \\
&& \textbf{Mean}
& \textbf{785.8} & \textbf{739.6} & \textbf{5.88}
& \textbf{44.853} & \textbf{44.459} & \textbf{44.760} & \textbf{$+0.301$} \\

\bottomrule
\end{tabular}%
}
\end{table*}

The aggregate behavior persists across recovery seeds. Averaged over the three seeds, \textsc{Loss+P} selects fewer updates in all four settings, with mean reductions from $5.88\%$ to $37.05\%$, while the corresponding mean Avg7 differences remain between $-0.169$ and $+0.301$ percentage points. At the individual-seed level, however, pressure does not simply shift stopping toward earlier checkpoints. In 11 of the 12 seed--setting combinations, \textsc{Loss+P} stops earlier, whereas on Qwen3-8B-Base at $r=.4$ with seed 7713 it continues beyond the loss-only stopping point ($821.3$ versus $693.3$ updates) and attains higher Avg7 ($44.771$ versus $43.990$). This variation is consistent with pressure providing trajectory-dependent information about recovery progress rather than acting as a uniformly more aggressive stopping threshold.

\paragraph{Across loss operating points.}
We next vary the loss threshold while keeping the remaining stopping protocol fixed. This tests whether the observed contribution of pressure depends on one particular operating point of the loss-only criterion. Table~\ref{tab:pressure_stopping_threshold_sweep} reports results for threshold scales $s\in\{0.5,1.0,1.5,2.0\}$.

\begin{table*}[t]
\centering
\caption{
\textbf{Controlled stopping comparison across nearby loss operating points.}
The loss threshold is scaled by $s\in\{0.5,1.0,1.5,2.0\}$ while the remaining stopping protocol is fixed. \textsc{Loss} and \textsc{Loss+P} report mean recovery updates across six compression methods. Update reduction is measured relative to \textsc{Loss}; $\Delta$Avg7 is \textsc{Loss+P} minus \textsc{Loss} in percentage points. The final column reports the number of methods whose Avg7 difference remains within $\pm1$ pp. We use seed 7711 that is also in Table~\ref{tab:downstream-main}
}
\label{tab:pressure_stopping_threshold_sweep}

\small
\setlength{\tabcolsep}{5.0pt}
\renewcommand{\arraystretch}{1.10}

\begin{tabular}{
lcc
r
>{\columncolor{pressureblue}}r
rrc
}
\toprule
Backbone & $r$ & Scale $s$
& \textsc{Loss}
& \cellcolor{pressurebluehead}\textbf{\textsc{Loss+P}}
& \shortstack{Update reduction\\(\%)}
& $\Delta$Avg7 (pp)
& Within $\pm1$ pp \\
\midrule

\multirow{8}{*}{Llama-3.1-8B}
& \multirow{4}{*}{.6}
& $0.5\times$ & 362.7 & 352.0 & 2.94  & $-0.112$ & 6/6 \\
&& $1.0\times$ & 362.7 & 288.0 & 20.59 & $-0.043$ & 6/6 \\
&& $1.5\times$ & 352.0 & 266.7 & 24.24 & $+0.047$ & 6/6 \\
&& $2.0\times$ & 341.3 & 245.3 & 28.12 & $-0.012$ & 5/6 \\
\cmidrule(lr){2-8}
& \multirow{4}{*}{.4}
& $0.5\times$ & 970.7 & 661.3 & 31.87 & $-0.003$ & 6/6 \\
&& $1.0\times$ & 960.0 & 522.7 & 45.56 & $-0.004$ & 6/6 \\
&& $1.5\times$ & 896.0 & 458.7 & 48.81 & $-0.662$ & 5/6 \\
&& $2.0\times$ & 789.3 & 405.3 & 48.65 & $-0.846$ & 4/6 \\

\midrule

\multirow{8}{*}{Qwen3-8B-Base}
& \multirow{4}{*}{.6}
& $0.5\times$ & 384.0 & 384.0 & 0.00 & $+0.000$ & 6/6 \\
&& $1.0\times$ & 373.3 & 341.3 & 8.57 & $-0.078$ & 6/6 \\
&& $1.5\times$ & 373.3 & 341.3 & 8.57 & $-0.078$ & 6/6 \\
&& $2.0\times$ & 352.0 & 341.3 & 3.03 & $-0.083$ & 6/6 \\
\cmidrule(lr){2-8}
& \multirow{4}{*}{.4}
& $0.5\times$ & 906.7 & 789.3 & 12.94 & $+0.029$ & 6/6 \\
&& $1.0\times$ & 906.7 & 725.3 & 20.00 & $+0.042$ & 6/6 \\
&& $1.5\times$ & 768.0 & 725.3 & 5.56 & $+0.016$ & 6/6 \\
&& $2.0\times$ & 682.7 & 661.3 & 3.12 & $+0.150$ & 5/6 \\

\bottomrule
\end{tabular}
\end{table*}

Across the tested operating points, adding pressure generally changes the selected recovery point while preserving similar downstream Avg7. The effect size varies with the loss threshold, as expected, but the qualitative result does not depend on the default operating point: pressure continues to affect the stopping decision across a broad range of loss thresholds, with most method-level Avg7 differences remaining within $\pm1$ percentage point. These results support the interpretation of recovery pressure as providing information about recovery progress that is complementary to the scalar loss signal, rather than reproducing one particular loss-threshold choice.

%% file: tables/full_pressure_recovery.tex
\newcommand{\best}[1]{\textbf{#1}}
\newcommand{\pos}[1]{\textcolor{teal!60!black}{#1}}
\newcommand{\nega}[1]{\textcolor{red!60!black}{#1}}

\begin{table*}[t]
\centering
\footnotesize
\setlength{\tabcolsep}{2.5pt}
\renewcommand{\arraystretch}{1.1}
\caption{\textbf{Model recoverability and endpoint pressure across low-rank baselines.} Endpoint and +Recovery report zero-shot Avg7 (\%) before and after 384-update BASIS recovery (mean $\pm$ SD across 3 seeds), with Gain showing net improvement. $P_0$ measures the initial pressure magnitude on the C4 probe. Superscripts denote rank within each setting; bold marks the best score.}
\label{tab:main_recoverability}
\begin{tabular*}{\textwidth}{@{\extracolsep{\fill}}llrrrr|rrrr@{}}
\toprule
\multirow{2}{*}{$r$} & \multirow{2}{*}{\textbf{Method}} & \multicolumn{4}{c|}{\textbf{Llama-3.1-8B}} & \multicolumn{4}{c}{\textbf{Qwen3-8B-Base}} \\
\cmidrule(lr){3-6}\cmidrule(l){7-10}
& & \textbf{Endpoint} & \textbf{+Recovery} & \textbf{Gain} & $P_{0}$ & \textbf{Endpoint} & \textbf{+Recovery} & \textbf{Gain} & $P_{0}$ \\
\midrule
\multirow{6}{*}{0.8} & ASVD & 33.55$^{(6)}$ & \cellcolor{basisblue}55.14{\tiny $\pm0.12$}$^{(6)}$ & \pos{+21.60} & 1.544 & \textbf{59.61}$^{(1)}$ & \cellcolor{basisblue}\textbf{62.40}{\tiny $\pm0.62$}$^{(1)}$ & \pos{+2.78} & 0.296 \\
 & SVD-LLM & 51.94$^{(2)}$ & \cellcolor{basisblue}58.80{\tiny $\pm0.03$}$^{(3)}$ & \pos{+6.86} & 0.699 & 55.93$^{(3)}$ & \cellcolor{basisblue}60.57{\tiny $\pm0.92$}$^{(4)}$ & \pos{+4.64} & 0.398 \\
 & Basis Sharing & 51.17$^{(4)}$ & \cellcolor{basisblue}57.97{\tiny $\pm0.09$}$^{(4)}$ & \pos{+6.79} & 0.868 & 53.97$^{(6)}$ & \cellcolor{basisblue}60.42{\tiny $\pm0.49$}$^{(5)}$ & \pos{+6.45} & 0.509 \\
 & MoDeGPT & \textbf{61.10}$^{(1)}$ & \cellcolor{basisblue}\textbf{62.45}{\tiny $\pm0.04$}$^{(1)}$ & \pos{+1.35} & 0.662 & 54.48$^{(5)}$ & \cellcolor{basisblue}59.35{\tiny $\pm0.35$}$^{(6)}$ & \pos{+4.86} & 0.448 \\
 & GFWSVD & 48.34$^{(5)}$ & \cellcolor{basisblue}57.25{\tiny $\pm0.23$}$^{(5)}$ & \pos{+8.91} & 1.163 & 55.25$^{(4)}$ & \cellcolor{basisblue}60.64{\tiny $\pm0.11$}$^{(3)}$ & \pos{+5.39} & 0.609 \\
 & Swift-SVD & 51.55$^{(3)}$ & \cellcolor{basisblue}58.91{\tiny $\pm0.04$}$^{(2)}$ & \pos{+7.35} & 0.705 & 56.10$^{(2)}$ & \cellcolor{basisblue}61.09{\tiny $\pm0.87$}$^{(2)}$ & \pos{+5.00} & 0.409 \\
\midrule
\multirow{6}{*}{0.6} & ASVD & 32.55$^{(6)}$ & \cellcolor{basisblue}44.86{\tiny $\pm2.40$}$^{(6)}$ & \pos{+12.31} & 2.178 & 34.60$^{(6)}$ & \cellcolor{basisblue}52.35{\tiny $\pm0.12$}$^{(5)}$ & \pos{+17.75} & 1.615 \\
 & SVD-LLM & 36.81$^{(2)}$ & \cellcolor{basisblue}50.92{\tiny $\pm0.42$}$^{(3)}$ & \pos{+14.11} & 0.836 & 43.12$^{(3)}$ & \cellcolor{basisblue}54.57{\tiny $\pm0.26$}$^{(2)}$ & \pos{+11.45} & 0.505 \\
 & Basis Sharing & 33.67$^{(4)}$ & \cellcolor{basisblue}48.47{\tiny $\pm0.99$}$^{(5)}$ & \pos{+14.80} & 1.572 & 44.03$^{(2)}$ & \cellcolor{basisblue}53.23{\tiny $\pm0.35$}$^{(4)}$ & \pos{+9.20} & 0.920 \\
 & MoDeGPT & \textbf{45.94}$^{(1)}$ & \cellcolor{basisblue}\textbf{53.80}{\tiny $\pm0.68$}$^{(1)}$ & \pos{+7.87} & 0.645 & 41.50$^{(5)}$ & \cellcolor{basisblue}49.77{\tiny $\pm0.12$}$^{(6)}$ & \pos{+8.27} & 0.575 \\
 & GFWSVD & 32.98$^{(5)}$ & \cellcolor{basisblue}48.78{\tiny $\pm0.22$}$^{(4)}$ & \pos{+15.79} & 3.614 & \textbf{45.77}$^{(1)}$ & \cellcolor{basisblue}53.82{\tiny $\pm0.23$}$^{(3)}$ & \pos{+8.05} & 1.410 \\
 & Swift-SVD & 36.39$^{(3)}$ & \cellcolor{basisblue}51.16{\tiny $\pm0.74$}$^{(2)}$ & \pos{+14.77} & 0.835 & 43.04$^{(4)}$ & \cellcolor{basisblue}\textbf{55.31}{\tiny $\pm0.12$}$^{(1)}$ & \pos{+12.27} & 0.504 \\
\bottomrule
\end{tabular*}
\end{table*}

%% file: tables/main_numeric_geometry_shift.tex
\begin{table*}[ht]
\centering
\small
\setlength{\tabcolsep}{4.5pt}
\renewcommand{\arraystretch}{1.08}

\caption{
\textbf{Controlled fixed-coordinate projection diagnostic for factor-update recovery} (LLaMA-3.1-8B, \(r=0.4\)). Normal evaluates each method in its learned coordinates; Projected maps recovered back to Source-defined fixed-coordinate subspace. 
}

\label{tab:geometry_shift_main}
\resizebox{\textwidth}{!}{
\begin{tabular}{lrr|rrrr|rr}
\toprule
\multirow{2}{*}{\textbf{Recovery method}} &
\multicolumn{2}{c|}{\textbf{Endpoint Avg7$\uparrow$}} &
\multicolumn{4}{c|}{\textbf{Avg7 projection diagnostic}} &
\multicolumn{2}{c}{\textbf{PPL Avg$\downarrow$}} \\
\cmidrule(lr){2-3}
\cmidrule(lr){4-7}
\cmidrule(l){8-9}
&
\textbf{Learned} &
\textbf{Projected} &
\textbf{Total} &
\textbf{Projected gain} &
\textbf{Coord.-shift} &
\textbf{Shift share} &
\textbf{Learned} &
\textbf{Projected} \\
\midrule
 \rowcolor{basisblue} BASIS
& \textbf{50.48} & \textbf{50.48}
& \pos{\textbf{+17.29}} & \pos{\textbf{+17.29}}
& \textbf{+0.00} & \textbf{0.0\%}
& \textbf{16.57} & \textbf{16.57} \\
U-LoRA
& 50.15 & 39.17
& \pos{+16.96} & \pos{+5.98}
& +10.98 & 64.7\%
& 17.34 & 223.96 \\
V-LoRA
& 49.91 & 41.80
& \pos{+16.73} & \pos{+8.61}
& +8.12 & 48.5\%
& 17.90 & 119.25 \\
JFT
& 50.26 & 40.85
& \pos{+17.07} & \pos{+7.67}
& +9.40 & 55.1\%
& 17.56 & 283.05 \\
\bottomrule
\end{tabular}
}
\end{table*}